\theoremstyle{plain}
\newtheorem{theorem}{Theorem}[section]
\theoremstyle{definition}
\newtheorem{assumption}[theorem]{Assumption}
\theoremstyle{remark}
\icmltitlerunning{Prediction-powered Generalization of Causal Inferences}
\begin{document}

\twocolumn[
\icmltitle{Prediction-powered Generalization of Causal Inferences}

% It is OKAY to include author information, even for blind
% submissions: the style file will automatically remove it for you
% unless you've provided the [accepted] option to the icml2024
% package.

% List of affiliations: The first argument should be a (short)
% identifier you will use later to specify author affiliations
% Academic affiliations should list Department, University, City, Region, Country
% Industry affiliations should list Company, City, Region, Country

% You can specify symbols, otherwise they are numbered in order.
% Ideally, you should not use this facility. Affiliations will be numbered
% in order of appearance and this is the preferred way.
\icmlsetsymbol{equal}{*}

\begin{icmlauthorlist}
\icmlauthor{Ilker Demirel}{mit,broad}
\icmlauthor{Ahmed Alaa}{ucb}
\icmlauthor{Anthony Philippakis}{broad}
\icmlauthor{David Sontag}{mit}
\end{icmlauthorlist}

\icmlaffiliation{ucb}{Department of Computational Precision Health, UC Berkeley and UCSF}
\icmlaffiliation{broad}{Eric and Wendy Schmidt Center, Broad Institute of MIT and Harvard}
\icmlaffiliation{mit}{MIT CSAIL}

\icmlcorrespondingauthor{Ilker Demirel}{demirel@mit.edu}

% You may provide any keywords that you
% find helpful for describing your paper; these are used to populate
% the "keywords" metadata in the PDF but will not be shown in the document
\icmlkeywords{Machine Learning, Causal Inference, Clinical Trials, Observational Studies, Experimental Studies, Combining Evidence, ICML}

\vskip 0.3in
]

% this must go after the closing bracket ] following \twocolumn[ ...

% This command actually creates the footnote in the first column
% listing the affiliations and the copyright notice.
% The command takes one argument, which is text to display at the start of the footnote.
% The \icmlEqualContribution command is standard text for equal contribution.
% Remove it (just {}) if you do not need this facility.

%\printAffiliationsAndNotice{}  % leave blank if no need to mention equal contribution
\printAffiliationsAndNotice{} % otherwise use the standard text.

\begin{abstract}
    Causal inferences from a randomized controlled trial (RCT) may not pertain to a {\em target} population where some effect modifiers have a different distribution. Prior work studies {\em generalizing} the results of a trial to a target population with no outcome but covariate data available. We show how the limited size of trials makes generalization a statistically infeasible task, as it requires estimating complex nuisance functions. We develop generalization algorithms that supplement the trial data with a prediction model learned from an additional {\em observational} study (OS), without making {\em any} assumptions on the OS. We theoretically and empirically show that our methods facilitate better generalization when the OS is \enquote{high-quality}, and remain robust when it is not, and {\em e.g.}, have unmeasured confounding.
\end{abstract}

\section{Introduction}
\label{sec:intro}
Experimental data from randomized controlled trials (RCT) is the gold standard for causal inference as various biases are avoided by design \cite{imbens2015causal, Hernan2021-yd}. However, in addition to being time and cost-intensive, RCTs often exhibit limited external validity, and their findings may not apply to a {\em target population} \cite{rothwell2005external, stuart2011use}. The generalizability of an RCT is compromised when baseline factors that influence prognosis ({\em effect modifiers}) have different distributions in the trial and target populations \cite{dahabreh2019generalizing} (see \Cref{fig:1}). For instance, trials may consist of healthier individuals on average than routine clinical practice. Since the overall health status likely affects the prognosis, it leads to \enquote{confounding} bias between the population-level effects in the trial and the target populations \cite{hernan2004structural}.

\citet{dahabreh2019generalizing, dahabreh2020extending} develop methods that use individual-level covariate, treatment, and outcome data from a trial and only covariate information from the target population to estimate causal quantities in the latter (generalization). In this work, we show how combining trial data with potentially biased observational data, {\em e.g.}, found from electronic health records, can power better generalization.  

\paragraph{Our Contributions}
We derive the generalization mean-squared error when an outcome model learned from the trial is used in the target sample to estimate an average causal effect in the target population, and probe how it increases when the trial is small and not representative of the target population (\Cref{sec:LFTG}). Drawing inspiration from recent advances in using black-box models for valid statistical inference \cite{schuler2021increasing, angelopoulos2023prediction}, we develop {\em prediction-powered} estimators that leverage additional observational data without {\em any} assumptions on it and discuss when they lead to lower generalization error (\Cref{sec:lppod}). We simulate over a thousand data-generating processes and find that our estimators yield remarkable improvements when the observational data is high-quality and maintain baseline performance when it is not (\Cref{sec:synth}).

\paragraph{Related Work} 
There is growing interest in integrating data from trials and observational studies (OS) \cite{bareinboim2016causal, yang2019combining, nice2022uk, colnet2024causal}. \citet{schuler2021increasing, liao2023transfer} show how adjustment by the predictions of a model learned from an OS can increase power in analyzing a trial. Similarly, \citet{guo2021multi} investigate how coupling trial data and with \enquote{control-variates} constructed in an OS may enable smaller-variance estimation of the average treatment effect (ATE) in the {\em trial population}. \citet{Hartman2015-td, degtiar2023conditional} study generalization to a target population defined by the OS population or its union with the trial population. \citet{han2023multiply} study ATE estimation in a target population by incorporating data from multiple {\em source} populations, where the ATE is identifiable in {\em all} of the populations but different. \citet{oberst2023understanding} review methods that combine ATE estimates from a trial and an OS to obtain a better hybrid estimate \cite{rosenman2020combining, Cheng2021-sn, yang2023elastic}. \citet{kallus2018removing, Chen2021-eo, hatt2022combining} consider the heterogeneity in effects and focus on the conditional ATE (CATE) {\em function}. \citet{rosenman2021designing} adopt a different angle and studies more efficient trial design using data from OS.

Another line of papers studies {\em benchmarking} evidence from OS \cite{forbes2020benchmarking}. \citet{hussain2022falsification, hussain2023falsification, demirel2024benchmarking} develop falsification tests for the causal assumptions by comparing the findings of an OS and a trial. \citet{de2024detecting, de2024hidden} focus on {\em quantifying} the hidden confounding in an OS, and \citet{karlsson2024detecting} show how one can {\em detect} hidden confounding using multiple OS with a shared data-generating process.

In the works above, the target population of interest is taken as either the OS population or its union with the trial population. We consider a more general setting where the target population is defined separately from the trial and OS populations so long as it consists of trial-eligible individuals. For instance, the target population can represent a subgroup in the trial with a small sample size.

Our goal is to estimate population-level causal effects in the target population, for which only covariate information is available, by integrating data from a small trial and a large OS. We detail the necessary causal identification assumptions in \Cref{sec:background}, which crucially do not enforce any unverifiable conditions on the OS, and describe our estimators in \Cref{sec:lppod}. We do not go the route of cooking a recipe to combine real-valued estimates from the trial and the OS, nor promise to give guarantees on the granular CATE function, as the former offers poor flexibility in utilizing rich observational data and the latter replaces the {\em causal} assumptions on the OS with statistical assumptions on its bias function. Our approach lies somewhere in between: we fit an outcome {\em function} from the OS using flexible machine learning models, which can be subject to {\em causal biases}, and analyze how coupling it with trial data can power the estimation of a {\em real-valued} causal estimand in the target population.
\section{Background}
\label{sec:background}
\subsection{Notation and Objective}
\label{sec:back}
We consider a {\em nested} design where a trial is sampled from an underlying population of trial-eligible individuals. Note that our methods can easily be extended to {\em nonnested} designs where the target sample is obtained separately; {\em e.g.}, to represent a subgroup for which the trial sample alone cannot power statistically significant inference.

We have access to an i.i.d. sample of observations ${\cal D} = \{ W_i \}_{i=1}^n$ with $W_i = (X_i, S_i, S_i \times A_i, S_i \times Y_i)$, where $X \in {\cal X}$ is a $d$-dimensional covariate vector, $S$ is a binary trial participation indicator, $A$ is a categorical treatment, and $Y \in \mathbb{R}$ is the outcome of interest. Only covariate data is available for non-participants ($S=0$), while treatment and outcome data are also available for participants ($S=1$). 

The {\em target} population of interest is represented by non-participants. We denote by ${\cal D}_1 \subset {\cal D}$ the set of trial participants and by ${\cal D}_0 \subset {\cal D}$ the set of non-participants with sizes $n_1 = \sum_{i=1}^n \ind{S_i=1}$ and $n_0 = \sum_{i=1}^n \ind{S_i=0}$, partitioning the composite sample ${\cal D}$. Further, we denote by $P_1$ and $P_0$ the joint distribution of $W$ in the underlying trial and target populations. For instance, $X \sim P_0$ represents a covariate drawn from the target distribution $P(X \mid S=0)$.
\begin{figure}[t]
    \centering
    \includegraphics[width=\linewidth]{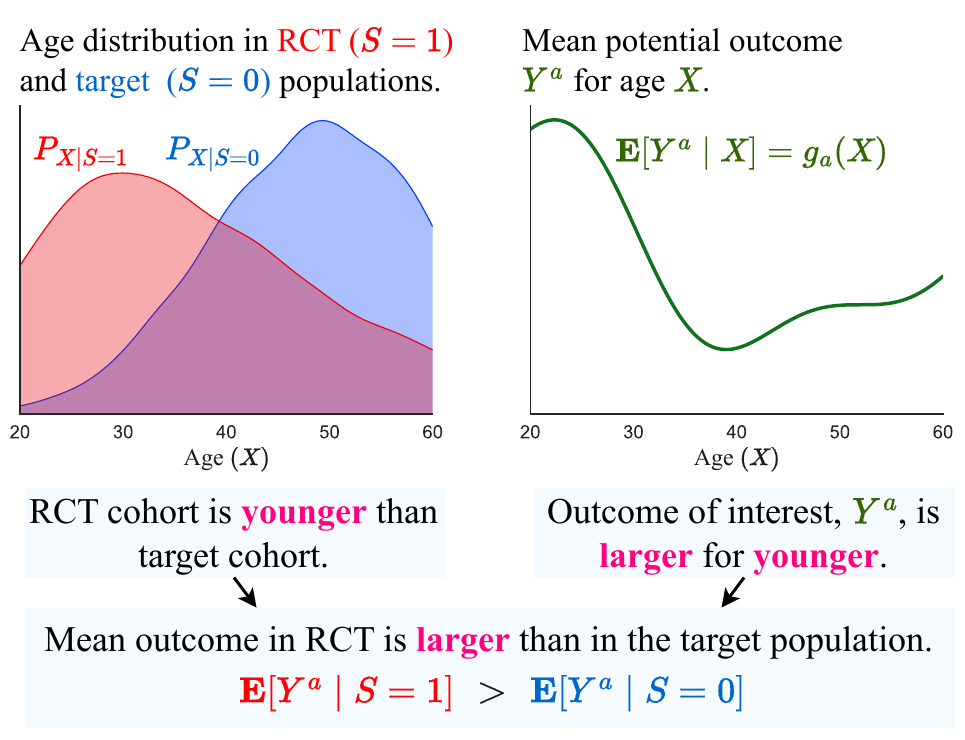}
    \vspace{-10pt}
    \caption{Age influences both selection into the trial and the outcome, inducing confounding bias between the population-level mean potential outcomes in the trial and target populations.}
    \label{fig:1}
\end{figure}

We seek causal inference in the target population. Specifically, denoting by $Y^a$ the {\em potential} outcome under treatment $A=a$, we want to estimate the average potential outcomes in the target population.
\begin{equation} \label{eq:apo_target}
    \mu_a \coloneqq \E[Y^a \mid S=0].
\end{equation}
A more common causal estimand, the average treatment effect, can be directly derived from the average potential outcomes ({\em e.g.},\ $\mu_1 - \mu_0$ in a binary treatment setting). Focusing on potential outcomes allows for simpler exposition, and they are of independent interest in many applications. 

The challenge in estimating $\mu_a$ is three-fold. The first is obvious: no outcome data is available for non-participants. One can contemplate resorting to outcome data from the trial, which brings us to the second challenge. The potential outcome $Y^a$ can only be observed for those who received treatment $A=a$. When treatment assignment depends on {\em unobserved} factors that also affect the outcome, one risks {\em confounding} bias, which presents a non-trivial challenge in analyzing {\em observational} data (see \Cref{sec:lppod}). However, it is easily avoided in trials by {\em randomized} treatment assignment, and the average potential outcome in {\em trial population}, $\E[Y^a \vert S=1]$, can be reliably estimated. The final challenge is $\E[Y^a \vert S=0] \neq \E[Y^a \vert S=1]$ when there is confounding by {\em trial participation}, leading to different distributions of effect modifiers in trial and target populations (see \Cref{fig:1}). Therefore, one cannot generalize population-level effect estimates from a trial to the target population, but needs to adjust for confounding by trial participation. 

Next, we state the causal assumptions needed to estimate $\mu_a$ by incorporating outcome data from the trial.
\subsection{Assumptions for Causal Inference}
\begin{assumption}[{\em Consistency}] \label{asm:cons}
            $A=a \implies Y=Y^a$.
\end{assumption}
\begin{assumption}[{\em Mean ignorability of treatment assignment in trial}] \label{asm:nuc}
        $\E[Y^a \mid X, S=1] = \E[Y^a \mid X, S=1, A=a]$.
\end{assumption}
\begin{assumption}[{\em Positivity of treatment assignment in trial}] \label{asm:pot}
             $P(A=a \mid X=x, S=1) > 0$.
\end{assumption}
Assumptions \ref{asm:cons}-\ref{asm:pot} are satisfied in an RCT by design, and they enable causal inference within the trial population, {\em i.e.},\ reliable estimation of $\E[Y^a \mid S=1]$.
\begin{assumption}[{\em Mean ignorability of trial participation}] \label{asm:nuem}
             $\E[Y^a \mid X] = \E[Y^a \mid X, S=1] = \E[Y^a \mid X, S=0]$.
\end{assumption}
\begin{assumption}[{\em Positivity of selection into trial}] \label{asm:pos}
            \qquad $P(S=0 \mid X=x) > 0 \implies P(S=1 \mid X=x) > 0$.
\end{assumption}
Assumption~\ref{asm:nuem} requires that within levels of measured covariates $X$, potential outcomes in the trial and target populations are the same on average. Assumption~\ref{asm:pos} ensures that every patient has a nonzero probability of participating in the trial, and we do not have to rely on pure {\em extrapolation}. Assumptions~\ref{asm:nuem} and \ref{asm:pos} transform the problem of \enquote{generalizing the results of a trial} into a {\em covariate} shift problem and allow one to identify $\mu_a$ as follows \cite{dahabreh2020extending}.
\begin{align} 
    \mu_a
    &= \E_{X \sim P_0} [ \E[ Y^a \mid X, S=0]] \nonumber \\
    &= \E_{X \sim P_0}[ \E[ Y^a \mid X, S=1]]  \nonumber \\
    &= \E_{X \sim P_0}[ \E[ Y \mid X, S=1, A=a]]. \label{eq:apo_iden}
\end{align}
where last two steps follow from Assumptions \ref{asm:nuem}-\ref{asm:pos} and \ref{asm:cons}-\ref{asm:pot}. Note that \eqref{eq:apo_iden} can be estimated using only covariate data from non-participants ($S=0$) and covariate, treatment, and outcome data from the trial participants ($S=1$).
\section{Generalization Using Experimental Data} \label{sec:LFTG}
\citet{dahabreh2020extending} propose estimators of \eqref{eq:apo_iden} based on outcome functions, weighting by the inverse of the trial participation probability, and doubly-robust (DR) ones. We focus on the outcome function approach as it more lucidly uncovers the limitations of generalization from trial data, and the synthetic results in \citet{dahabreh2020extending} show that it outperforms the weighting-based approaches and performs on par with the DR ones (as we also verify in \Cref{app:ipw_dr}). Our findings reveal how a predictive model trained on large-scale observational data could help.

We define the mean outcome function in the trial population $S=1$ under treatment $A=a$ as
\begin{align}
    g_a (X) 
    &\coloneqq \E[Y \mid X,S=1,A=a] \label{eq:mof} \\
    &= \E[Y^a \mid X,S=1] \tag{Assumptions~\ref{asm:cons}-\ref{asm:pot}} \\
    &= \E[Y^a \mid X]. \tag{Assumptions~\ref{asm:nuem}-\ref{asm:pos}}
\end{align}
One can estimate $\hat{g}_a (X)$ from the trial sample ${\cal D}_1$, and then average its predictions in the target sample ${\cal D}_0$. This leads to the following outcome model (OM) estimator on the composite sample ${\cal D}$.
\begin{equation} \label{eq:apo_est_rct}
    \hat{\mu}_{a}^{\tn{OM}} = \frac{1}{n_0} \sum_{i=1}^n \ind{S_i = 0} \hat{g}_a (X_i).
\end{equation}
In the remainder of this section, we investigate when $\hat{\mu}_{a}^{\tn{OM}}$ is expected to have high mean-squared error (MSE). Our next result gives an approximation for the MSE in the special case where $X$ is purely categorical, which provides perspective into the limitations of $\hat{\mu}_{a}^{\tn{OM}}$ for the more general case as well.

\begin{restatable}[]{proposition}{varLb}
    \label{proposition:varLb}
    Let $X$ be a categorical covariate stratifying the population into $K$ groups and denote by $n_{s=1,a,k}$ the number of trial participants from group $X=k$ assigned to treatment $A=a$, and by $\sigma^2_{a,k}$ the variance of outcome among such patients. Let us estimate the outcome function $g_a (X=k)$ with the sample mean of outcomes $Y$ of participants in group $X=k$ assigned to treatment $A=a$.
    
    Suppose that Assumptions~\ref{asm:cons}-\ref{asm:pos} hold. When $n_0$ is large, the MSE of $\hat{\mu}_{a}^{\tn{OM}}$ in \eqref{eq:apo_est_rct} can be approximated as
    \begin{equation} \label{eq:varLB}
        \E[(\hat{\mu}_{a}^{\tn{OM}} - \mu_a)^2] \approx \sum_{k=1}^K p^2_{s=0} (k)\frac{\sigma^2_{a,k}}{n_{s=1,a,k}},
    \end{equation}
    where $p_{s=0} (k) \coloneqq P(X=k \mid S=0)$ is the proportion of patients from group $X=k$ in the target population.
\end{restatable}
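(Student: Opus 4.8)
The plan is to exploit the categorical structure to rewrite both the estimand and the estimator as weighted group-wise averages, and then to decompose the mean-squared error by conditioning on the sampling design so that the only remaining randomness is in the outcomes. Grouping the $n_0$ non-participants by their covariate value, let $n_{s=0,k}$ count those in group $X=k$ and $\hat{p}_{s=0}(k) \coloneqq n_{s=0,k}/n_0$ be the corresponding empirical proportion. Because $\hat{g}_a(X_i)$ depends on $X_i$ only through its group,
$$\hat{\mu}_a^{\tn{OM}} = \sum_{k=1}^K \hat{p}_{s=0}(k)\, \hat{g}_a(X=k), \qquad \hat{g}_a(X=k) = \frac{1}{n_{s=1,a,k}} \sum_{i:\, X_i=k,\, S_i=1,\, A_i=a} Y_i.$$
By the identification \eqref{eq:apo_iden} together with $g_a(X=k) = \E[Y^a \mid X=k]$, the target is the analogous population average $\mu_a = \sum_{k=1}^K p_{s=0}(k)\, g_a(X=k)$.

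Next I would condition on the full design $\mathcal{C}$ consisting of all covariates $\{X_i\}$, participation indicators $\{S_i\}$, and treatment assignments $\{A_i\}$. Under $\mathcal{C}$ the counts $n_{s=0,k}$ and $n_{s=1,a,k}$, and hence the weights $\hat{p}_{s=0}(k)$, are fixed, so the only randomness left is in the outcomes. By Assumptions~\ref{asm:cons}--\ref{asm:pos} each $\hat{g}_a(X=k)$ is then a sample mean that is conditionally unbiased for $g_a(X=k)$ with conditional variance $\sigma^2_{a,k}/n_{s=1,a,k}$, and the group-wise averages, being functions of disjoint sets of i.i.d. outcomes, are mutually independent. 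Hence $\E[\hat{\mu}_a^{\tn{OM}} \mid \mathcal{C}] = \sum_k \hat{p}_{s=0}(k)\, g_a(X=k)$ and $\mathrm{Var}(\hat{\mu}_a^{\tn{OM}} \mid \mathcal{C}) = \sum_k \hat{p}^2_{s=0}(k)\, \sigma^2_{a,k}/n_{s=1,a,k}$. Taking expectations over $\mathcal{C}$ (conditionally on $n_0$) shows $\hat{\mu}_a^{\tn{OM}}$ is unbiased, so its MSE equals its variance, and the law of total variance gives
$$\E[(\hat{\mu}_a^{\tn{OM}} - \mu_a)^2] = \E\!\left[\sum_{k=1}^K \hat{p}^2_{s=0}(k)\frac{\sigma^2_{a,k}}{n_{s=1,a,k}}\right] + \mathrm{Var}\!\left(\sum_{k=1}^K \hat{p}_{s=0}(k)\, g_a(X=k)\right).$$

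Finally I would invoke the large-$n_0$ regime. The second term is the variance of a weighted multinomial proportion and is $O(1/n_0)$, hence negligible when $n_0$ is large (and in particular dominated by the leading $O(1/n_{s=1,a,k})$ contribution of a small trial). In the first term, $\hat{p}_{s=0}(k) \to p_{s=0}(k)$ by the law of large numbers, and since the factors $\sigma^2_{a,k}/n_{s=1,a,k}$ are bounded, replacing $\hat{p}_{s=0}(k)$ by $p_{s=0}(k)$ yields $\sum_k p^2_{s=0}(k)\, \sigma^2_{a,k}/n_{s=1,a,k}$, which is exactly \eqref{eq:varLB}. The main obstacle — and the reason the result is stated as an approximation ($\approx$) rather than an identity — is disentangling the intertwined randomness of the cell counts: one must justify treating the trial counts $n_{s=1,a,k}$ in the denominator as fixed while simultaneously sending the target proportions $\hat{p}_{s=0}(k)$ to their population limits and discarding the $O(1/n_0)$ term. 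This is precisely where the \enquote{$n_0$ large, trial small} heuristic does the work, as it makes the target-side sampling error vanish relative to the irreducible trial-side estimation variance.
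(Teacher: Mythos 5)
Your proposal is correct and follows essentially the same route as the paper's proof: rewrite the estimator as $\sum_{k} \hat{p}_{s=0}(k)\,\hat{g}_a(k)$, establish exact unbiasedness via Assumptions~\ref{asm:cons}--\ref{asm:pos} and the independence of the target proportions and the trial group means, apply the law of total variance, and discard the $O(1/n_0)$ target-side terms. The only (minor) difference is that you condition on the full design, which makes explicit the treatment of the trial counts $n_{s=1,a,k}$ as fixed, whereas the paper conditions only on ${\cal D}_0$ and handles those counts implicitly, exactly as the proposition statement itself does.
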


\Cref{proposition:varLb} reveals the key challenge in our endeavor. The reason behind the need for a \enquote{generalization procedure} is that some effect modifiers' distributions might differ in the trial and target populations. Reading off \eqref{eq:varLB}, one can see that when the trial is limited in representing patient profiles that are prevalent in the target population (small $n_{s=1, a,k}$, large $p_{s=0} (k)$), the MSE will be larger. That is, inference in target population gets more challenging when it becomes \enquote{more different} from the trial population.

The insights from \Cref{proposition:varLb} extend to the case with continuous covariates and parametric estimators $\hat{g}_a (X) = g_a (X; \hat{\theta})$ ({\em e.g.},\ a random forest). Let us denote by ${\cal A}$ the algorithm that fits $\hat{\theta}$ from the trial sample ({\em e.g.},\ ridge regression), {\em i.e.},\ $\hat{\theta} = {\cal A} ({\cal D}_1)$. As ${\cal D}_1 \sim P_1$, we write $\hat{\theta} \sim {\cal A} (P_1)$ to refer to the randomness in estimating $\hat{\theta}$ from ${\cal D}_1$. Next, we give an approximation for the MSE in the general case.

\begin{restatable}[]{theorem}{mseLem}
    \label{theorem:mseLem}
    Suppose that Assumptions~\ref{asm:cons}-\ref{asm:pos} hold and consider a parametric estimator $\hat{g}_a (X) = g_a(X;\hat{\theta})$ for the outcome function. For large $n_0$, the MSE of $\hat{\mu}_{a}^{\tn{OM}}$ in \eqref{eq:apo_est_rct} can be approximated as
    \begin{align} 
        &\E[(\hat{\mu}_{a}^{\tn{OM}} - \mu_a)^2] \nonumber \\
        &\approx \E_{X \sim P_0} \big[ \underbrace{\E_{\hat{\theta} \sim {\cal A} (P_1)}[ g_a (X; \hat{\theta})] - g_a (X)}_{\eqqcolon ~\tn{SB}_g (X)} \big]^2 \label{eq:mse_p1} \\
        &\hspace{10pt}+\V_{\hat{\theta} \sim {\cal A} (P_1)}\big( \E_{X \sim P_0} [ g_a (X; \hat{\theta}) ] \big). \label{eq:mse_p2}
    \end{align}
\end{restatable}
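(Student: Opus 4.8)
The plan is a bias--variance decomposition that cleanly separates the two \emph{independent} sources of randomness in $\hat{\mu}_a^{\tn{OM}}$: the trial sample ${\cal D}_1$, which enters only through $\hat{\theta} = {\cal A}({\cal D}_1)$, and the target covariates $\{X_i : S_i = 0\}$, which are i.i.d.\ draws from $P_0$ and independent of ${\cal D}_1$ because ${\cal D}_0$ and ${\cal D}_1$ are disjoint parts of the i.i.d.\ composite sample. Recall from \eqref{eq:apo_iden} that the estimand is $\mu_a = \E_{X \sim P_0}[g_a(X)]$. I introduce the shorthand $\bar{g}_a(\hat{\theta}) \coloneqq \E_{X \sim P_0}[g_a(X;\hat{\theta})]$ for the population-level target average of the fitted outcome function at a fixed parameter value.

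First I would condition on $\hat{\theta}$ and use that, given $\hat{\theta}$, the summands $g_a(X_i;\hat{\theta})$ for $S_i = 0$ are i.i.d.\ with mean $\bar{g}_a(\hat{\theta})$. Hence $\E[\hat{\mu}_a^{\tn{OM}} \mid \hat{\theta}] = \bar{g}_a(\hat{\theta})$ and $\V(\hat{\mu}_a^{\tn{OM}} \mid \hat{\theta}) = \tfrac{1}{n_0}\V_{X \sim P_0}(g_a(X;\hat{\theta}))$. Adding and subtracting $\bar{g}_a(\hat{\theta})$ inside $\hat{\mu}_a^{\tn{OM}} - \mu_a$ and noting that, conditionally on $\hat{\theta}$, the term $\bar{g}_a(\hat{\theta}) - \mu_a$ is constant while $\hat{\mu}_a^{\tn{OM}} - \bar{g}_a(\hat{\theta})$ has conditional mean zero, the cross term vanishes and the conditional MSE splits exactly as
\begin{equation*}
\E[(\hat{\mu}_a^{\tn{OM}} - \mu_a)^2 \mid \hat{\theta}] = \tfrac{1}{n_0}\V_{X \sim P_0}(g_a(X;\hat{\theta})) + (\bar{g}_a(\hat{\theta}) - \mu_a)^2 .
\end{equation*}
Taking expectation over $\hat{\theta} \sim {\cal A}(P_1)$ produces an exact identity whose first term is $O(1/n_0)$; the ``large $n_0$'' hypothesis is precisely what lets me drop it, leaving $\E[(\hat{\mu}_a^{\tn{OM}} - \mu_a)^2] \approx \E_{\hat{\theta}}[(\bar{g}_a(\hat{\theta}) - \mu_a)^2]$.

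Finally I would apply the ordinary bias--variance decomposition to $\bar{g}_a(\hat{\theta})$ viewed as an estimator of $\mu_a$, giving $\E_{\hat{\theta}}[(\bar{g}_a(\hat{\theta}) - \mu_a)^2] = (\E_{\hat{\theta}}[\bar{g}_a(\hat{\theta})] - \mu_a)^2 + \V_{\hat{\theta}}(\bar{g}_a(\hat{\theta}))$. The variance term is exactly \eqref{eq:mse_p2}. For the squared-bias term I exploit the independence of $\hat{\theta}$ and $X \sim P_0$ to exchange the two expectations (Fubini), obtaining $\E_{\hat{\theta}}[\bar{g}_a(\hat{\theta})] - \mu_a = \E_{X \sim P_0}[\E_{\hat{\theta}}[g_a(X;\hat{\theta})] - g_a(X)] = \E_{X \sim P_0}[\tn{SB}_g(X)]$, so the squared bias equals \eqref{eq:mse_p1}.

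The main obstacle is not the algebra but making the ``large $n_0$'' step precise in the nested design, where $n_0$ is itself random: strictly, one should either condition on $n_0$ (equivalently, on $\{S_i\}$) throughout, or argue that the discarded term is of order $\E[1/n_0]$ and hence negligible relative to the retained systematic-bias and estimation-variance terms. A secondary point to state carefully is the independence of $\hat{\theta}$ and the target covariates, since it is what simultaneously legitimizes the vanishing cross term and the Fubini exchange.
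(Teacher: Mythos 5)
Your proposal is correct and follows essentially the same route as the paper's proof: both exploit the independence of $\hat{\theta}$ (fit on ${\cal D}_1$) from the target covariates, decompose the MSE into squared bias plus variance, identify the bias exactly as $\E_{X \sim P_0}[\tn{SB}_g(X)]$ via Fubini and $\mu_a = \E_{X \sim P_0}[g_a(X)]$, and discard the same $O(1/n_0)$ term (in the paper this is the $\E_{\hat{\theta}}[\V_{{\cal D}_0}(\cdot \mid \hat{\theta})]$ piece of the law of total variance; in yours it is the conditional sampling variance $\tfrac{1}{n_0}\V_{X \sim P_0}(g_a(X;\hat{\theta}))$, which are the same quantity). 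Your closing remark about the randomness of $n_0$ in the nested design is a fair point of rigor that the paper also glosses over by implicitly conditioning on the selection indicators.
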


The first term, \eqref{eq:mse_p1}, is the statistical bias (SB). Crucially, it is obtained by integrating the bias function $\tn{SB}_g (X)$ over the {\em target} covariate distribution $P_{X \mid S=0}$, making $\hat{\mu}_{a}^{\tn{OM}}$ susceptible to weak overlap between trial and target populations. Consider the case where $g_a (X; \hat{\theta})$ is misspecified/underfit. $\tn{SB}_g (X)$ will be larger where $P_{X \mid S=1}$ has little weight, since $g_a (X; \hat{\theta})$ is fit using the {\em trial} sample ${\cal D}_1$. $\hat{\mu}_{a}^{\tn{OM}}$ may then suffer substantial bias if $P_{X \mid S=0}$ is large in covariate regions where the trial support is weak. It is therefore essential to ensure that $g_a (X; \hat{\theta})$ is rich enough and can match the complexity of $g_a (X)$ to avoid a large bias term. However, in practice, one's ability to flexibly model $g_a (X; \hat{\theta})$ is severely limited as the small size of trials ({\em e.g.,} $\sim 200$) can lead to overfitting, {\em i.e.},\ increasing the variance term \eqref{eq:mse_p2}. We empirically demonstrate this tradeoff in \Cref{app:bvt}.
\section{Prediction-powered Generalization Using Experimental and Observational Data} \label{sec:lppod}
Here, we study how integrating rich observational data with limited experimental data can make the generalization task more statistically feasible. We index by $S=2$ the observational population with joint distribution $P_2$. We assume access to an i.i.d. sample of observations $(X_i, A_i, Y_i) \sim P_{2}$ and denote by ${\cal D}_{2,a}$ the set of patients who received treatment $A=a$ in the observational data. In the first step, we fit a predictive model $f_a: {\cal X} \to \mathbb{R}$ by minimizing the empirical mean-squared error in ${\cal D}_{2,a}$ to approximate
\begin{equation} \label{eq:obs-om}
    \E[Y \mid X, S=2, A=a].
\end{equation}
Unlike the trial sample, large observational data can support parametrizing $f_a(X)$ with powerful machine learning models, allowing it to model complex functions.

If one is willing to make Assumptions~\ref{asm:cons}-\ref{asm:pos} for the observational study (OS), {\em i.e.}, $S=2$ instead of $S=1$, $\mu_a$ can be identified as $\E_{X \sim P_0} [\E [Y \lvert X, S=2, A=a]]$ via the same machinery in \eqref{eq:apo_iden}. One could then apply $f_a(X)$ in the composite sample ${\cal D}$ to estimate $\mu_a$ as 
\begin{equation} \label{eq:apo_est_os_naive}
    \hat{\mu}_{a}^{\tn{OS-OM}} = \frac{1}{n_0} \sum_{i=1}^n \ind{S_i = 0} f_a (X_i),
\end{equation}
analogous to \eqref{eq:apo_est_rct}. While Assumptions~\ref{asm:cons}-\ref{asm:pos} are defensible for trials, some of them rarely hold for observational studies in practice, particularly the ignorability of treatment assignment (no unmeasured confounding). In contrast to most of the literature on causal inference using observational data, we take an extremely assumption-light approach, making {\em no} assumptions on observational data. We define the \enquote{bias function} as the difference between the outcome function in the trial, $g_a (X)$, and the observational predictor, $f_a (X)$.
\begin{align}
    b_a (X) 
    &\coloneqq f_a (X) - g_a (X) \label{eq:bax_defn} \\
    &=\underbrace{f_a(X) - \E[Y \mid X, S=2, A=a]}_{\tn{statistical~bias}} \nonumber \\
    &\hspace{5pt}+\underbrace{\E[Y^a \mid X, S=2, A=a] - \E[Y^a \mid X, S=2]}_{\tn{confounding~bias}} \nonumber \\
    &\hspace{5pt}+\underbrace{\E[Y^a \mid X, S=2] - \E[Y^a \mid X, S=1]}_{\tn{transportation~bias}},  \nonumber
\end{align}
since $\E[Y^a \mid X, S=1] = g_a (X)$ (see \eqref{eq:mof}). The statistical bias term is related to fitting $f_a (X)$ using a finite sample, and it vanishes with more data given enough model capacity. Confounding and transportation biases, however, are the price of avoiding Assumptions \ref{asm:nuc} and \ref{asm:nuem} for the observational study ($S=2$). They will not disappear even with infinite data from the observational population, rendering $\hat{\mu}_{a}^{\tn{OS-OM}}$ in \eqref{eq:apo_est_os_naive} an {\em inconsistent} estimator for $\mu_a = \E [Y^a \mid S=0]$ even when $f_a(X) = \E[Y \mid X, S=2, A=a]$. 

In Sections~\ref{ssec:LEG} and \ref{ssec:padj}, we derive two new identifications of $\mu_a$ that integrate the predictions of $f_a$ in a statistically valid way and discuss how they lead to more sample-efficient estimation in comparison to \eqref{eq:apo_iden}. We give regression function-based estimators, derive their MSEs, and compare them to that of the baseline in \Cref{theorem:mseLem}. 
\subsection{Additive Bias Correction to Predictive Model} \label{ssec:LEG}
We covered why using $f_a$ alone is unreliable. Nonetheless, it may carry useful signal we can exploit when coupled with trial data. First, using the trial sample ${\cal D}_1$, we show how one can learn the bias function of the predictive model, $b_a (X) = f_a(X) - g_a(X)$. We then give an estimator for $\mu_a$ that uses the predictions $f_a (X)$ in the target sample by correcting with their estimated bias, $\hat{b}_a (X)$. We formalize in \Cref{theorem:ppommse} and \Cref{sssec:prr} when it is more advantageous to construct an estimator of $\mu_a$ that relies on fitting the bias function $b_a (X)$ instead of the outcome function $g_a(X)$, such as the illustrative example depicted in \Cref{fig:2}.
\subsubsection{Identification} \label{sssec:id-abc}
We start by trivially writing
\begin{equation} \label{eq:alt_iden_tapo}
        \mu_a = \E \left[ f_a(X) \mid S=0 \right] - \E \left[f_a(X) \! - \! Y^a \mid S=0 \right].
\end{equation}
The first term can be estimated by averaging $f_a(X)$ in the target sample, which is generally biased for $\mu_a$. The second term \enquote{removes} this bias; however, as it contains a {\em counterfactual} variable, $Y^a$, it is not immediately clear how one would estimate it. Our next result shows that it can be identified without additional assumptions on $f_a$.
\begin{figure}[t]
    \centering
    \includegraphics[width=\linewidth]{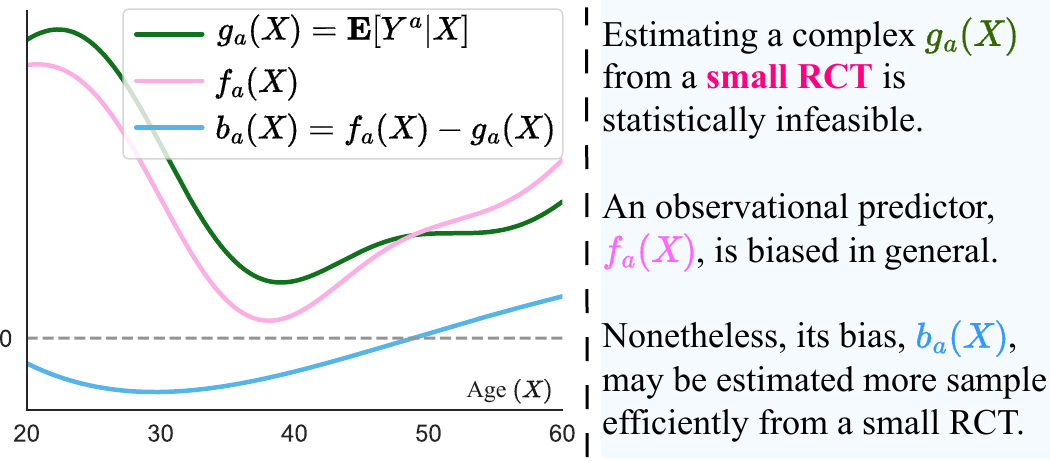}
    \vspace{-10pt}
    \caption{A biased predictor $f_a (X)$ can still capture higher order polynomials, making its bias $b_a (X)$ \enquote{easier} to learn than $g_a (X)$.}
    \label{fig:2}
\end{figure}

\begin{restatable}[]{lemma}{altiden}
    \label{lemma:alt_iden}
    Suppose that Assumptions~\ref{asm:cons}-\ref{asm:pos} hold. Let $f_a: {\cal X} \to \mathbb{R}$ and define the error variable
    \begin{equation} \label{eq:zerror}
        Z \coloneqq f_a(X) - Y.
    \end{equation}
    $\mu^a$ can be identified as
    \begin{equation}
        \mu_a = \E_{X \sim  P_0} [ f_a(X) - \E [Z \mid X, S=1, A=a] \big]. \label{eq:altidenlemma}
    \end{equation}
\end{restatable}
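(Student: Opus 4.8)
The plan is to show that the counterfactual quantity $\E[f_a(X) - Y^a \mid S=0]$ appearing in the second term of \eqref{eq:alt_iden_tapo} equals $\E_{X \sim P_0}[\E[Z \mid X, S=1, A=a]]$, where $Z = f_a(X) - Y$. Since $f_a$ is a fixed (deterministic) function of $X$, the key observation is that $f_a(X) - Y^a$ behaves exactly like a ``pseudo-outcome'' under treatment $a$, so I can reuse the identification machinery of \eqref{eq:apo_iden} verbatim. I would first rewrite the target estimand as $\mu_a = \E_{X \sim P_0}[f_a(X)] - \E_{X \sim P_0}[\E[f_a(X) - Y^a \mid X, S=0]]$ by conditioning on $X$ and using the tower property on the second term of \eqref{eq:alt_iden_tapo}.

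The main work is to identify the inner conditional expectation $\E[f_a(X) - Y^a \mid X, S=0]$. First I would pull $f_a(X)$ out of the conditional expectation, since it is $\sigma(X)$-measurable, giving $\E[f_a(X) - Y^a \mid X, S=0] = f_a(X) - \E[Y^a \mid X, S=0]$. Next I would apply Assumption~\ref{asm:nuem} (mean ignorability of trial participation) to replace $\E[Y^a \mid X, S=0]$ with $\E[Y^a \mid X, S=1]$; Assumption~\ref{asm:pos} guarantees the relevant conditional expectations are well-defined on the support of $P_0$. Then, exactly as in the chain \eqref{eq:apo_iden}, Assumptions~\ref{asm:cons}--\ref{asm:pot} let me write $\E[Y^a \mid X, S=1] = \E[Y \mid X, S=1, A=a]$, so that
\begin{equation}
    \E[f_a(X) - Y^a \mid X, S=0] = f_a(X) - \E[Y \mid X, S=1, A=a]. \nonumber
\end{equation}

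Finally I would recognize the right-hand side as a conditional expectation of $Z$: since $f_a(X)$ is again $\sigma(X)$-measurable, $f_a(X) - \E[Y \mid X, S=1, A=a] = \E[f_a(X) - Y \mid X, S=1, A=a] = \E[Z \mid X, S=1, A=a]$. Substituting back into the decomposition and folding the two outer $P_0$-expectations together yields $\mu_a = \E_{X \sim P_0}[f_a(X) - \E[Z \mid X, S=1, A=a]]$, which is exactly \eqref{eq:altidenlemma}. I do not expect any serious obstacle here, as the argument is essentially the identification of \eqref{eq:apo_iden} applied to the shifted outcome $f_a(X) - Y^a$; the only point requiring minor care is the measurability step that lets $f_a(X)$ move freely in and out of the conditional expectations and recombine into $Z$, and confirming that the positivity Assumption~\ref{asm:pos} ensures each conditioning event has positive probability over the target support so that every conditional expectation in the chain is well-defined.
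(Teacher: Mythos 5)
Your proposal is correct and follows essentially the same route as the paper's proof: both start from the decomposition \eqref{eq:alt_iden_tapo}, apply the tower property over $X \sim P_0$, and then invoke Assumptions~\ref{asm:nuem}--\ref{asm:pos}, \ref{asm:nuc}--\ref{asm:pot}, and \ref{asm:cons} in turn to convert the counterfactual term into $\E[Z \mid X, S=1, A=a]$. The only cosmetic difference is that you pull $f_a(X)$ out of the conditional expectation and fold it back in at the end, whereas the paper keeps $f_a(X) - Y^a$ together while changing the conditioning events; these are the same argument.
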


Note that $Z$ can be calculated for trial participants and second term in \eqref{eq:altidenlemma} can be estimated with covariate information from the target sample and covariate, treatment, and \enquote{error} information from the trial sample, as we cover next.
\subsubsection{Regression Function-based Estimation} \label{sssec:reg-est-abc}
By \eqref{eq:bax_defn}, \eqref{eq:zerror}, and \eqref{eq:mof}, it is straightforward to see that
\begin{equation*}
    b_a (X) = \E [Z \mid X, S=1, A=a].
\end{equation*}   
That is, the identification in \eqref{eq:altidenlemma} is through the bias function in \eqref{eq:bax_defn}. We denote by $b_a(X; \hat{\gamma})$ a parametric fit obtained by regressing $Z$ onto covariates $X$ in the trial sample and write the additive-bias-correction (ABC) estimator.
\begin{equation} \label{eq:ppom}
    \hat{\mu}_{a}^{\tn{ABC}} = \frac{1}{n_0} \sum_{i=1}^n \ind{S_i = 0} \big(f_a (X_i) - b_a(X_i; \hat{\gamma}) \big).
\end{equation}

\begin{restatable}[]{theorem}{ppommse}
    \label{theorem:ppommse}
    Suppose that Assumptions~\ref{asm:cons}-\ref{asm:pos} hold. For large $n_0$, the MSE of $\hat{\mu}_{a}^{\tn{ABC}}$ in \eqref{eq:ppom} can be approximated as
    \begin{align} 
        &\E[(\hat{\mu}_{a}^{\tn{ABC}} - \mu_a)^2] \nonumber \\
        &\approx \E_{X \sim P_0} \big[ \E_{\hat{\gamma} \sim {\cal A} (P_1)}[ b_a (X; \hat{\gamma})] - b_a (X) \big]^2  \label{eq:ppbm_mse_p1} \\
        &\hspace{10pt}+\V_{\hat{\gamma} \sim {\cal A} (P_1)}\big( \E_{X \sim P_0} [ b_a (X; \hat{\gamma}) ] \big). \label{eq:ppbm_mse_p2}
    \end{align}
\end{restatable}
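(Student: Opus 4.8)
The plan is to mirror the argument behind \Cref{theorem:mseLem}, exploiting that the only quantity in $\hat{\mu}_{a}^{\tn{ABC}}$ varying with the trial sample is the fitted bias $b_a(X;\hat{\gamma})$, whereas $f_a$ is a fixed function estimated from the \emph{independent} observational sample ${\cal D}_{2,a}$. First I would combine \Cref{lemma:alt_iden} with the identity $b_a(X) = \E[Z \mid X, S=1, A=a]$ (established just before \eqref{eq:ppom}) to write the estimand in the clean form $\mu_a = \E_{X \sim P_0}[f_a(X) - b_a(X)]$, so that the centered estimator becomes
\begin{equation*}
    \hat{\mu}_{a}^{\tn{ABC}} - \mu_a = \frac{1}{n_0}\sum_{i=1}^n \ind{S_i=0}\big(f_a(X_i) - b_a(X_i;\hat{\gamma})\big) - \E_{X \sim P_0}\big[f_a(X) - b_a(X)\big].
\end{equation*}

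Next, conditioning on $\hat{\gamma}$ and on $f_a$, the average over the $n_0$ target covariates is a mean of i.i.d.\ terms, so for large $n_0$ the law of large numbers lets me replace it by $\E_{X \sim P_0}[f_a(X) - b_a(X;\hat{\gamma})]$, discarding the $O(1/n_0)$ Monte Carlo fluctuation. Because $f_a$ is fixed, the two $\E_{X \sim P_0}[f_a(X)]$ terms cancel exactly, leaving
\begin{equation*}
    \hat{\mu}_{a}^{\tn{ABC}} - \mu_a \approx -\big(\E_{X \sim P_0}[b_a(X;\hat{\gamma})] - \E_{X \sim P_0}[b_a(X)]\big),
\end{equation*}
a quantity that depends on the trial sample only through $\hat{\gamma} \sim {\cal A}(P_1)$.

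Finally I would take the expectation over $\hat{\gamma} \sim {\cal A}(P_1)$ and apply the bias--variance decomposition of the MSE to $U(\hat{\gamma}) \coloneqq \E_{X \sim P_0}[b_a(X;\hat{\gamma})]$ around its target $\E_{X \sim P_0}[b_a(X)]$. Swapping the order of the $\hat{\gamma}$- and $X$-expectations by Fubini turns the squared-bias piece into $\E_{X \sim P_0}[\E_{\hat{\gamma} \sim {\cal A}(P_1)}[b_a(X;\hat{\gamma})] - b_a(X)]^2$, which is exactly \eqref{eq:ppbm_mse_p1}, and the remaining variance piece is $\V_{\hat{\gamma} \sim {\cal A}(P_1)}(\E_{X \sim P_0}[b_a(X;\hat{\gamma})])$, which is \eqref{eq:ppbm_mse_p2}.

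The main delicacy is bookkeeping the independence structure that licenses the large-$n_0$ approximation: $\hat{\gamma}$ and $f_a$ are fit on the disjoint trial (${\cal D}_1$) and observational (${\cal D}_{2,a}$) samples and are both independent of the target sample ${\cal D}_0$, so that conditional on them the target-sample average is a genuine i.i.d.\ mean whose fluctuation vanishes, and the cross term between the $\hat{\gamma}$-variance and the $O(1/n_0)$ Monte Carlo variance does not contribute at leading order. Beyond this, the derivation is identical to that of \Cref{theorem:mseLem} with $b_a$ playing the role of $g_a$ and the fixed offset $f_a$ washing out---which is precisely why the ABC estimator trades the difficulty of learning $g_a$ for that of learning the (hopefully simpler) bias $b_a$.
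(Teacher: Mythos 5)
Your proposal is correct and takes essentially the same route as the paper's own proof: both invoke \Cref{lemma:alt_iden} to write $\mu_a = \E_{X \sim P_0}[f_a(X) - b_a(X)]$, exploit that the fixed $f_a$ contribution cancels (its Monte Carlo fluctuation and the associated cross term vanishing for large $n_0$), and use the independence of $\hat{\gamma} \sim {\cal A}(P_1)$ from ${\cal D}_0$ to reduce everything to the bias--variance decomposition of \Cref{theorem:mseLem} with $b_a$ in place of $g_a$. The only cosmetic difference is ordering---the paper decomposes the MSE into squared bias plus variance first and then applies the large-$n_0$ approximation inside each term, while you approximate the centered estimator first---but the content is identical.
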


\begin{figure}[t]
\vspace{-0.7em}
\begin{algorithm}[H]
   \caption{Generalization via additive bias correction}
   \label{alg:abc}
\begin{algorithmic}
   \STATE {\bfseries Input:} Sample ${\cal D}$, Predictor $f_a$, MSE optimizer ${\cal A}$
   \STATE ${\cal D}_{1,a} \subset {\cal D}$: Trial cohort ($S_i = 1$) with treatment $A_i=a$
   \FOR{$W_i \in {\cal D}_{1,a}$} 
   \STATE Calculate the prediction error $Z_i = f_a (X_i) - Y_i$
   \ENDFOR
   \STATE Fit $b_a (X; \hat{\gamma})$ by minimizing MSE for $Z$ in ${\cal D}_{1,a}$ using ${\cal A}$
   \STATE Return $\hat{\mu}_{a}^{\tn{ABC}}$ in \eqref{eq:ppom}
\end{algorithmic}
\end{algorithm}
\vspace{-2em}
\end{figure}
The significance of \Cref{theorem:mseLem} is showing that the MSE of $\hat{\mu}_{a}^{\tn{ABC}}$ admits the same form with that of $\hat{\mu}_{a}^{\tn{OM}}$ in \Cref{theorem:mseLem}. The difference is that the MSE is governed by how well the bias function $b_a(X)$ is estimated instead of the outcome function $g_a(X)$. This result formalizes how leveraging a potentially biased observational predictor can be more viable for the \enquote{generalization} task. Consider the case in \Cref{fig:2} where $f_a(X)$ captures higher degree polynomials in $g_a(X)$, resulting in $b_a (X)$ being a low-degree polynomial. One can then fit a linear model with a few polynomial features for $b_a (X; \hat{\gamma})$, resulting in both controlled bias \eqref{eq:ppbm_mse_p1} and variance \eqref{eq:ppbm_mse_p2} terms. On the other hand, fitting $g_a (X; \hat{\theta})$ similarly will result in a large bias term \eqref{eq:mse_p1}. We provide a detailed discussion in the next section and empirically demonstrate how the bias (\eqref{eq:mse_p1}, \eqref{eq:ppbm_mse_p1}) and variance (\eqref{eq:mse_p2}, \eqref{eq:ppbm_mse_p2}) terms compare in \Cref{app:bvt}.
\subsubsection{Case Study: Polynomial Ridge Regression} \label{sssec:prr}
The symmetry between the MSEs in Theorems~\ref{theorem:mseLem} and \ref{theorem:ppommse} allows one to reason about the (comparative) performances of the outcome and bias function-based estimators in \eqref{eq:apo_est_rct} and \eqref{eq:ppom}. To gain further insight, we study the polynomial ridge regression framework and describe the regime where estimating $b_a (X)$ is more feasible than $g_a (X)$ in the setting described below. We focus on {\em finite-sample} results, which are of significant interest given the limited size of trials.

We consider $X \in [-1,1]$, denote by $L^2([-1,1])$ the space of square-integrable functions \footnote{$\int_{-1}^1 f^2(x) \diff x < \infty, \quad \forall f \in L^2([-1,1]).$} endowed with the inner-product $\langle f,g \rangle = \int_{-1}^1 f(x) g(x) \diff x$, and assume that $g_a, b_a \in L^2([-1,1])$ with bounded norms $\norm{g_a}, \norm{b_a} \leq 1$. Finally, we assume the following generative equations.
\begin{align}
    Y_i = g_a (X_i) + \eta_i, \qquad Z_i = b_a (X_i) - \eta_i \label{eq:gen1z},
\end{align}
where $\eta_i \sim {\cal N} (0, \sigma^2)$ are zero-mean i.i.d. noise variables and $Z_i$ are the patient-wise {\em error} terms for the predictive model $f_a(X)$, defined previously in \eqref{eq:zerror}. 

Let us now define, for a generic function $f$, the \enquote{empirical excess risk} of a fit $\hat{f}$ obtained from a sample of size $m$ as
\begin{equation} \label{eq:excess_risk}
    R_m (\hat{f}, f) \coloneqq \frac{1}{m} \sum_{i=1}^m ( \hat{f} (X_i) - f(X_i) )^2,
\end{equation}
which quantifies how far away the fit is from the true function. In the rest of this section, we study oracle upper bounds on $R_{n_1} (\hat{g}_a, g_a)$ and $R_{n_1} (\hat{b}_a, b_a)$ when $\hat{g}_a$ and $\hat{b}_a$ are fit via polynomial ridge regression in the trial sample ${\cal D}_1$. To that end, let us now introduce the Legendre polynomials which have convenient properties that facilitate clear exposition. We denote by $\phi_k : [-1,1] \to \mathbb{R}$ the $k$-th order {\em normalized} Legendre polynomial. The set $\{ \phi_k \}_{k=0}^{\infty}$ form an {\em orthonormal basis} \footnote{$\langle \phi_i, \phi_j \rangle = \delta_{ij}$, $\text{span}(\{ \phi_k \}_{k=0}^{\infty}) = L^2([-1,1])$.} for $L^2([-1,1])$; meaning that any function $f \in L^2([-1,1])$ can be uniquely represented as a linear combination of $\{ \phi_k \}_{k=0}^{\infty}$, allowing us to write
\begin{align} \label{eq:leg_exp}
    g_a (X) = \sum_{k=0}^{\infty} \lambda_k \phi_k (X), \quad b_a (X) = \sum_{k=0}^{\infty} \omega_k \phi_k (X),
\end{align}
where $\lambda_k = \langle g_a, \phi_k \rangle$ and $\omega_k = \langle b_a, \phi_k \rangle$. In practice, one can fit $\hat{g}_a$ and $\hat{b}_a$ using Legendre polynomials up to degree $d' \in \mathbb{N}$ with ridge regularization to avoid overfitting to the trial sample. Leaving the intermediary steps to \Cref{app:prr}, we proceed to state the corresponding upper bounds on the expected empirical excess risks. 

\begin{restatable}[Adopted from \citet{wainwright2019high}]{lemma}{lemRisk}
    \label{lemma:lemRisk}
    Let $X \in [-1,1]$, $g_a, b_a \!\in\! L^2([-1,1])$, $\norm{g_a}, \norm{b_a} \leq 1$, and consider the generative equations in \eqref{eq:gen1z} with noise variance $\sigma^2$. Denote by $\hat{g}_a$ and $\hat{b}_a$ the fits obtained by regressing $Y$ and $Z$ (see \eqref{eq:zerror}) onto $\{ \phi_k (X) \}_{k=0}^{d'}$ in trial sample ${\cal D}_1$ with an appropriately chosen ridge regularization penalty. We have
    \begin{align}
        \E_{X_i, \eta_i} [ R_{n_1} (\hat{g}_a, g_a) ] &\leq \sigma^2 d' / n_1  + \sum\nolimits_{k=d' + 1}^{\infty} \lambda_k^2, \label{eq:er_g} \\
        \E_{X_i, \eta_i} [ R_{n_1} (\hat{b}_a, b_a) ] &\leq \sigma^2 d' / n_1 + \sum\nolimits_{k=d' + 1}^{\infty} \omega_k^2.  \label{eq:er_b}
    \end{align}
\end{restatable}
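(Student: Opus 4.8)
The plan is to recognize both bounds as instances of the classical estimation–approximation (bias–variance) decomposition for a truncated series estimator, and to prove them in parallel, since the generative equations in \eqref{eq:gen1z} for $Y$ and $Z$ are structurally identical. I would carry out the $g_a$ case in full and then observe that substituting $(g_a, \lambda_k, \eta_i)$ by $(b_a, \omega_k, -\eta_i)$ leaves every step invariant — crucially, $-\eta_i \sim {\cal N}(0,\sigma^2)$ has the same variance as $\eta_i$ — so \eqref{eq:er_b} follows from \eqref{eq:er_g} verbatim.

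First I would set up the finite-dimensional linear model in Legendre coordinates. Writing $\Pi_{d'} g_a \coloneqq \sum_{k=0}^{d'}\lambda_k\phi_k$ for the projection onto $V_{d'} \coloneqq \mathrm{span}\{\phi_0,\dots,\phi_{d'}\}$ and $g_a^\perp \coloneqq g_a - \Pi_{d'}g_a = \sum_{k > d'}\lambda_k\phi_k$ for the tail, the regression of $Y$ onto $\{\phi_k\}_{k=0}^{d'}$ becomes a ridge-regularized least-squares problem in the $(d'+1)$-dimensional coordinate vector, with hat/smoother operator $H = \Phi(\Phi^\top\Phi + \rho I)^{-1}\Phi^\top$ where $\Phi_{ik} = \phi_k(X_i)$. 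I would then split the empirical excess risk $R_{n_1}(\hat g_a, g_a)$ into an estimation part — the deviation of $\hat g_a$ from the in-model target $\Pi_{d'}g_a$, driven by the noise and the unmodeled tail — and an approximation part, the deviation of $\Pi_{d'}g_a$ from $g_a$.

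For the approximation part, $\frac{1}{n_1}\sum_i (g_a^\perp(X_i))^2$ has expectation $\E_X[(g_a^\perp(X))^2]$, and orthonormality of the Legendre basis (Parseval) collapses this to at most $\sum_{k>d'}\lambda_k^2$, giving the second term of \eqref{eq:er_g}. For the estimation part, the noise-driven error $\frac{1}{n_1}\E\|H\eta\|^2 = \frac{\sigma^2}{n_1}\mathrm{tr}(H^\top H)$ is controlled by the effective degrees of freedom: since the eigenvalues of $H$ lie in $[0,1]$ with rank $d'+1$, we get $\mathrm{tr}(H^\top H) \le \mathrm{tr}(H) \le d'+1$, yielding a contribution $\lesssim \sigma^2 d'/n_1$. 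An appropriately chosen penalty $\rho$ simultaneously keeps the shrinkage bias on the retained coefficients negligible, so that it is dominated by the stated terms.

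The main obstacle is the interaction between the \emph{random} design and the ridge regularization inside the estimation term: I need the empirical Gram matrix $\frac{1}{n_1}\Phi^\top\Phi$ to be well-conditioned on $V_{d'}$ so that the effective degrees of freedom are genuinely $O(d')$ and the unmodeled tail $g_a^\perp$ does not leak through $H$ into the fitted coefficients. This is precisely the regime handled by the oracle least-squares machinery of \citet{wainwright2019high}, which I would invoke to convert the design-level control into the expected-risk bounds \eqref{eq:er_g}–\eqref{eq:er_b}; the residual off-by-one and normalization constants are absorbed by the inequality and the ``appropriately chosen'' penalty.
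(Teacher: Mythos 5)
Your proposal is correct and takes essentially the same route as the paper: the paper's proof simply casts $\hat{g}_a$ and $\hat{b}_a$ as norm-constrained least-squares fits over the span of $\{\phi_k\}_{k=0}^{d'}$ and then invokes the oracle inequalities for the orthogonal-basis approximation problem in Example 13.14, Section 13.3 of \citet{wainwright2019high} --- exactly the machinery you defer to for the random-design/estimation control. The extra structure you spell out (the $g_a \leftrightarrow b_a$ symmetry via $-\eta_i \sim {\cal N}(0,\sigma^2)$, Parseval for the tail term, the trace bound on the hat matrix) is just the internal content of that cited result, so the two arguments do not differ in substance.
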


The upper bounds in \eqref{eq:er_g} and \eqref{eq:er_b} share the first statistical error term, which grows with the number of polynomial features $d'$. Comparing the second terms reveals that estimating the bias function is favorable when $\sum_{k=d' + 1}^{\infty} \omega_k^2 < \sum_{k=d' + 1}^{\infty} \lambda_k^2$. One would expect the preceding condition to hold in two scenarios, which we discuss next. 

The first one is when the observational predictor is high quality. Precisely, if $f_a(X) \approx g_a (X)$, then $b_a(X) \approx 0$, implying small values for $\omega_k$ and sum of their squares. This is the same condition in \citet{angelopoulos2023prediction} for a black-box predictor to power better inference when coupled with a small amount of gold-standard data. In the context of {\em causal} inference, it would take the individual terms in \eqref{eq:bax_defn} to be as small as possible to warrant $f_a(X) \approx g_a (X)$, which requires observational study to have negligible hidden confounding for treatment assignment and to be transportable conditioned on $X$. 

The second scenario is when $b_a$ \enquote{mostly} consists of lower degree polynomials, that is, $w_k \approx 0$ for $k > d'$. This is a relaxed and more general version of the key assumption in \citet{kallus2018removing}, which requires $b_a (X)$ to be linear in $X$. The idea is that even when $f_a(X)$ is biased, it can still capture complex structure, such as the higher order polynomials modeling rapid turns in $g_a(X)$, and make $b_a (X)$ considerably simpler, as illustrated in \Cref{fig:2}.
\subsection{Augmented Outcome Modeling} \label{ssec:padj}
Here we draw from \citet{schuler2021increasing, liao2023transfer} and leverage the observational model by using its predictions as an additional regressor while estimating the outcome function from the trial. 

Using $f_a(X)$ as a covariate still makes for an easier estimation task when $g_a (X) = f_a (X) + b_a(X)$ with $f_a (X)$ capturing most of the complexity in $g_a(X)$ and $b_a(X)$ is a simpler function. However, it has two advantages over the additive bias correction approach we discuss below.

First is \textbf{robustness} when $f_a (X)$ does not carry useful information. For instance, let $f_a(X)$ be an independent noise term, $\eta$, for all $X$. Then the additive bias $b_a (X) = \eta - g_a (X)$ is just a noisier version of $g_a (X)$ and even more challenging to estimate. On the other hand, when the predictions $f_a(X)$ are used as a covariate, a good learning algorithm would just ignore it. We compare the two approaches' robustness with synthetic experiments (see \Cref{fig:faxn}).

Second is the \textbf{flexibility} in how the predictions are utilized. Consider the illustrative example where
$
    g_a(X) = f_a(X) / 2
$
and the bias of $f_a(X)$ can be corrected simply dividing it by two. On the other hand, additive bias $b_a(X) = g_a(X)$ is identical to the outcome function and not easier to estimate.
\subsubsection{Identification} \label{sssec:id-pa}
Let us define the {\em augmented} covariate vector as
\begin{equation} \label{eq:adcov}
    \tilde{X}_i \coloneqq [X_i^1, X_i^2, \ldots, X_i^d, f_a(X_i)],
\end{equation}
where $X_i^n$ is the $n$-th {\em original} covariate out of $d$. We denote $\tilde{X} \in \tilde{{\cal X}}$ where $\tilde{{\cal X}} = {\cal X} \times \mathbb{R}$. 

\begin{restatable}[]{lemma}{altiden2}
    \label{lemma:alt_iden_2}
    Suppose that Assumptions~\ref{asm:cons}-\ref{asm:pos} hold. Let $f_a: {\cal X} \to \mathbb{R}$ and define the augmented outcome function
    \begin{equation} \label{eq:hax_defn}
        h_a (\tilde{X}) \coloneqq \E[ Y \mid \tilde{X}, S=1, A=a].
    \end{equation} 
    where $\tilde{X}$ is defined in \eqref{eq:adcov}. $\mu^a$ can be identified as
    \begin{equation}
        \mu_a = \E_{X \sim P_0}[ h_a (\tilde{X}) ]. \label{eq:apo_iden_3}
    \end{equation} 
\end{restatable}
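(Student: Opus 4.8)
The plan is to reduce this augmented identification back to the plain identification already established in \eqref{eq:apo_iden}, exploiting the fact that the extra coordinate $f_a(X)$ appearing in $\tilde{X}$ is a \emph{deterministic} function of $X$ and therefore carries no information beyond $X$ itself. Concretely, I would first observe that $\sigma(\tilde{X}) = \sigma(X)$: since $f_a \colon {\cal X} \to \mathbb{R}$ is a fixed (measurable) map and $\tilde{X} = [X^1, \ldots, X^d, f_a(X)]$ by \eqref{eq:adcov}, the appended coordinate is $\sigma(X)$-measurable, so adjoining it does not refine the conditioning $\sigma$-algebra within any stratum of $(S,A)$.

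The key step is then to show that $h_a(\tilde{X}) = g_a(X)$ almost surely. By the definition \eqref{eq:hax_defn}, $h_a(\tilde{X}) = \E[Y \mid \tilde{X}, S=1, A=a]$; because conditioning on $\tilde{X}$ inside the stratum $\{S=1,A=a\}$ is the same as conditioning on $X$ there (identical $\sigma$-algebra by the previous observation), this conditional expectation equals $\E[Y \mid X, S=1, A=a]$, which is precisely $g_a(X)$ as defined in \eqref{eq:mof}.

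Finally, I would substitute into the already-proven identification chain \eqref{eq:apo_iden}. Under \Cref{asm:cons}--\Cref{asm:pos}, that chain yields $\mu_a = \E_{X \sim P_0}[\E[Y \mid X, S=1, A=a]] = \E_{X \sim P_0}[g_a(X)]$; replacing $g_a(X)$ by $h_a(\tilde{X})$ via the preceding step gives $\mu_a = \E_{X \sim P_0}[h_a(\tilde{X})]$, which is exactly \eqref{eq:apo_iden_3}.

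There is essentially no analytic difficulty here, so the only point warranting care is the measurability/information argument, namely making precise that conditioning on a deterministic feature $f_a(X)$ alongside $X$ leaves the conditional expectation unchanged (equivalently, $\E[Y \mid X, f_a(X)] = \E[Y \mid X]$ almost surely whenever $f_a(X)$ is $\sigma(X)$-measurable). I would state this cleanly rather than belabor it. It is worth emphasizing that this is exactly what frees the augmented approach from any assumption on $f_a$: because $h_a$ may use the coordinate $f_a(X)$ in an arbitrary way, in the worst case it simply discards it and recovers $g_a$, so validity never depends on the quality of the observational predictor.
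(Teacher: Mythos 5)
Your proposal is correct and is essentially the paper's own argument: the paper proves this lemma with the same observation, noting that $X$ and $\tilde{X}$ carry the same information, that Assumptions~\ref{asm:cons}--\ref{asm:pos} therefore continue to hold with $\tilde{X}$ in place of $X$, and that \eqref{eq:apo_iden_3} then follows by the same steps that yield \eqref{eq:apo_iden}. Your $\sigma$-algebra argument ($\sigma(\tilde{X})=\sigma(X)$, hence $h_a(\tilde{X})=g_a(X)$ almost surely, after which the established identification applies) simply makes precise the informal same-information remark the paper relies on.
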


Note that $X$ and $\tilde{X}$ carry the same {\em information} and Assumptions \ref{asm:cons}-\ref{asm:pos} continue to hold for $\tilde{X}$. The identification in \eqref{eq:apo_iden_3} thus follows from the same steps that lead to \eqref{eq:apo_iden}.

\begin{figure}[t]
%\vspace{-0.7em}
\begin{algorithm}[H]
   \caption{Generalization via augmented outcome model}
   \label{alg:aom}
\begin{algorithmic}
   \STATE {\bfseries Input:} Sample ${\cal D}$, Predictor $f_a$, MSE optimizer ${\cal A}$
   \STATE ${\cal D}_{1,a} \subset {\cal D}$: Trial cohort ($S_i = 1$) with treatment $A_i=a$
   \FOR{$W_i \in {\cal D}$}   
   \STATE Calculate the outcome prediction $f_a (X_i)$
   \STATE Construct the augmented covariate vector $\tilde{X}_i$ as in \eqref{eq:adcov}
   \ENDFOR
   \STATE Fit $h_a (\tilde{X}; \hat{\beta})$ by minimizing MSE for $Y$ in ${\cal D}_{1,a}$ using ${\cal A}$
   \STATE Return $\hat{\mu}_{a}^{\tn{AOM}}$ in \eqref{eq:apo_est_com}
\end{algorithmic}
\end{algorithm}
\vspace{-1em}
\end{figure}
\subsubsection{Regression Function-based Estimation} \label{sssec:reg-est-pa}
Note that $h_a (\tilde{X}) = h_a (X, f_a(X)) = g_a(X)$ and the only difference from the baseline approach in \Cref{sec:LFTG} is that we have an additional regressor, $\tilde{X}_{d+1} = f_a(X)$. We denote by $h_a (\tilde{X}_i; \hat{\beta})$ the parametric fit for $h_a (\tilde{X}_i)$ and write the {\em augmented} outcome modeling (AOM) estimator as
\begin{equation} \label{eq:apo_est_com}
    \hat{\mu}_{a}^{\tn{AOM}} = \frac{1}{n_0} \sum_{i=1}^n \ind{S_i = 0} h_a (\tilde{X}_i; \hat{\beta}).
\end{equation}
The approximation to the MSE $\E[ (\hat{\mu}_a^{\textnormal{AOM}} -\mu_a)^2]$ follows the same form with that of $\hat{\mu}_a^{\textnormal{OM}}$ in \Cref{theorem:mseLem} with the augmented outcome function $h_a$ replacing $g_a$. In the interest of space, we defer the precise statement to \Cref{app:aomproof}.

We close this section by mentioning two critical directions for future work to leverage observational data more efficiently: one related to {\em modeling} and the other to {\em estimation}.

\paragraph{Representation-powered Outcome Modeling} 
Instead of using a model's predictions $f_a (X)$, one can use the representations learned by the model as additional covariates in the trial \cite{johansson2016learning, shalit2017estimating}. This approach also allows for extracting richer information from the observational data more flexibly, {\em e.g.}, via unsupervised and multimodal learning methods.  

\paragraph{Doubly-robust Estimation} 
For the prediction-powered identifications of $\mu_a$ in \eqref{eq:altidenlemma} and \eqref{eq:apo_iden_3}, we focused only on regression function-based estimators to demonstrate the advantages of our approach. In \Cref{app:drest}, we give doubly-robust estimators that enjoy desirable properties such as asymptotic normality that enable the construction of confidence intervals \cite{chernozhukov2018double, kennedy2023towards}.
\section{Synthetic Experiments} \label{sec:synth}
We simulate over a thousand different synthetic data generating processes with varying levels of complexity in the outcome function $g_a(X)$, confounding bias in the observational study, and trial size $n_1$. We compare the root MSE (RMSE) of our estimators \eqref{eq:ppom} and \eqref{eq:apo_est_com}, which combine experimental and observational data, to that of the baselines \eqref{eq:apo_est_rct} and \eqref{eq:apo_est_os_naive} which use them alone. Bias-variance terms ({\em e.g.,} \eqref{eq:ppbm_mse_p1} and \eqref{eq:ppbm_mse_p2}) are presented in \Cref{app:bvt}. Further, we demonstrate the robustness of the augmented outcome modeling estimator in \eqref{eq:apo_est_com} over the additive bias correction estimator in \eqref{eq:ppom}. While the main results are concerned with outcome-modeling-based estimators, we present additional empirical results for the inverse propensity weighting and and doubly-robust estimators in \Cref{app:ipw_dr}. Our code is available at \url{https://github.com/demireal/ppci}.
\begin{figure}[t]
    \centering
    \includegraphics[width=\linewidth]{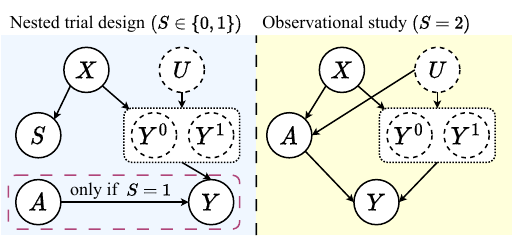}
    \vspace{-10pt}
    \caption{Data-generating process used in simulated experiments. ({\em Left.}) $X$ (observed) induces confounding by trial participation. ({\em Right.}) In the observational study, there is {\em hidden} confounding for treatment assignment due to $U$ (unobserved).}
    \label{fig:dgp}
\end{figure}
\subsection{Data-generating Process}
We consider two covariates $X, U \in [-1,1]$, a binary treatment strategy $A \in \{0, 1\}$, and a real-valued outcome $Y \in \mathbb{R}$. We first describe the probabilistic model that generates the potential outcomes $Y^0$ and $Y^1$ conditioned on $X$ and $U$. We then move on to explain the sampling mechanism in the nested trial design that generates the trial and target cohorts. Finally, we specify the patient sampling and treatment assignment mechanism underpinning the observational data we use to train a predictive model $f_a:[-1,1]^2 \to \mathbb{R}$. Results of simpler experiments where the functions underlying the data-generating process are specified to be linear are presented in \Cref{app:glmexp}.
\begin{figure*}[ht]
    \centering
    \includegraphics[width=\linewidth]{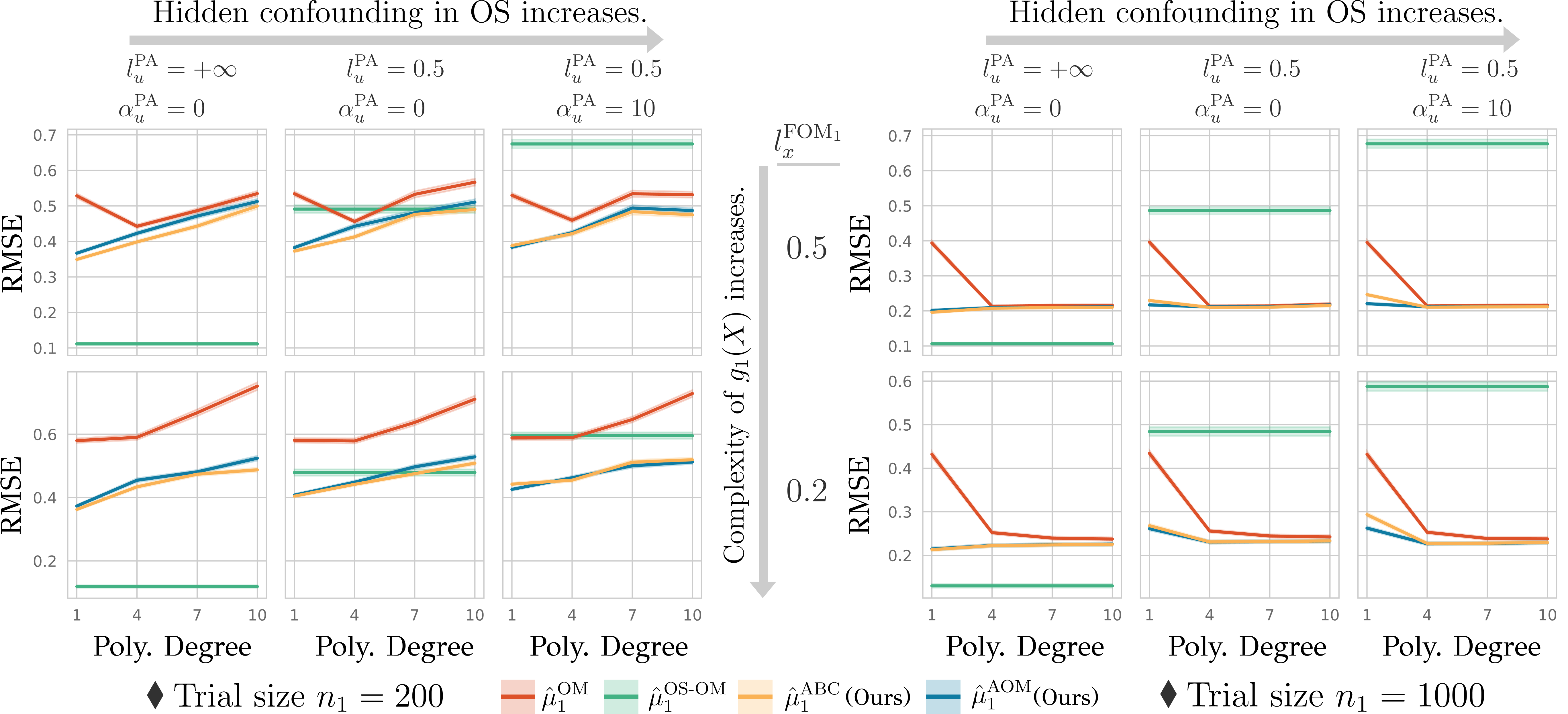}
    \vspace{-10pt}
    \caption{100 different set of data-generating functions are sampled for each $(l_x^{\tn{FOM}_1}, \alpha_u^{\tn{PA}}, n_1)$. We plot the RMSE averaged over 100 scenarios. Results are reported for four different numbers of polynomial features used to fit the underlying regression functions (if any).}
    \label{fig:synthetic-rmse}
\end{figure*}

\paragraph{Generating (Potential) Outcomes} 
We denote the {\em full} outcome model (FOM) for treatment $A=a$ by $\tn{FOM}_a: [-1,1]^2 \to \mathbb{R}$. For a patient with covariates $(X_i, U_i)$, the potential outcome is calculated as $Y^a_i = \tn{FOM}_a (X_i, U_i)$. Note that we use the same outcome model for patients in the trial, target, and observational samples.

We generate $\tn{FOM}_a$ by sampling from a GP with mean function $m(X,U) = 0$ and kernel function $k\left((X,U), (X',U')\right)$ \cite{rasmussen2006gaussian}. We create a composite kernel by adding a squared-exponential (SE) kernel to model the {\em local} variations and a linear kernel to model the trends in the outcome. Precisely, we have
\begin{align}
    &k((X,U),\!(X',U')) \!= \alpha_x^{\textnormal{FOM}_a} X X' + \alpha_u^{\textnormal{FOM}_a} U U' \tag{linear} \\
    &\hspace{5pt}+ \exp \left( -\frac{(X-X')^2}{2 (l_x^{\textnormal{FOM}_a})^2} - \frac{(U-U')^2 }{2 (l_u^{\textnormal{FOM}_a})^2} \right), \qquad \tn{(SE)} \label{eq:kernel_eq}
\end{align}

where $\alpha_x^{\textnormal{FOM}_a}, \alpha_u^{\textnormal{FOM}_a}, l_x^{\textnormal{FOM}_a}, l_u^{\textnormal{FOM}_a} \in \mathbb{R}_+$ are free parameters. We experiment with different values to simulate a diverse set of scenarios. For instance, a larger value for $\alpha_u^{\textnormal{FOM}_a}$ implies a stronger linear trend in $\tn{FOM}_a (X,U)$ along $U$-axis. More details are given at the end of this section. 

\paragraph{Generating Trial and Target Samples} 
We consider a {\em nested} study design and generate a composite {\em trial-eligible} patient cohort by sampling $X_i, U_i \sim \texttt{Uniform} [-1,1]$ independently. We denote by $P(S=1 \mid X_i, U_i)$ the probability of trial participation, which is generated as 
\begin{equation} \label{eq:psx_gp}
    P(S=1 \vert X_i, U_i) = \tn{median} \Big\{ \frac{1}{1 +  e^{-\tn{L}_{\tn{PS}}(X_i, U_i)}}, 0.1, 0.9 \Big\}.
\end{equation}
where the \enquote{logit} function $\tn{L}_{\tn{PS}}(X_i, U_i)$ is sampled from a GP with the composite linear + SE kernel in \eqref{eq:kernel_eq} with parameters $\alpha_x^{\textnormal{PS}} = 10$, $l_x^{\textnormal{PS}}=1$, $\alpha_u^{\textnormal{PS}} = 0$, $l_u^{\textnormal{PS}}=+\infty$. The last two parameters effectively imply that the trial participation probability does not depend on $U$ but $X$ only, ensuring \Cref{asm:nuem}. Taking a median with $0.1$ and $0.9$ ensures \Cref{asm:pos}. Trial participation is then sampled as $\texttt{Bernoulli} (P(S=1 \mid X_i, U_i))$.

Finally, for trial participants ($S_i=1$), the treatment assignment is sampled as $A_i \sim \texttt{Bernoulli} (0.5)$ and the observed outcome is generated as $Y = \tn{FOM}_{A_i} (X_i, U_i)$, which ensures that Assumptions~\ref{asm:cons}-\ref{asm:pot} hold.

\paragraph{Generating an Observational Sample} 
An observational cohort is generated by sampling $X_i, U_i \sim \tn{Uniform} [-1,1]$ independently. For each patient, treatments are sampled as $A_i \sim \texttt{Bernoulli} (P(A=1 \lvert S=2, X_i, U_i))$, where the probability of treatment assignment is generated similar to \eqref{eq:psx_gp} through a logit function $\tn{L}_{\tn{PA}}(X,U)$ sampled from a GP with parameters $\alpha_x^{\textnormal{PA}}, \alpha_u^{\textnormal{PA}}, l_x^{\textnormal{PA}}, l_u^{\textnormal{PA}} \in \mathbb{R}_+$. The observed outcomes are generated as $Y = \tn{FOM}_{A_i} (X_i, U_i)$. 

\paragraph{Simulated Scenarios and GP Parameters}
We focus on the mean potential outcome under treatment $A=1$ in the target population, $\mu_1 = \E [Y^1 \mid S=0]$. The sample size for the observational study (OS) and the target sample are set to $50,000$ and $20,000$, respectively. We experiment with different values for the parameters $n_1, l_x^{\tn{FOM}_1}$, and $\alpha_u^{\tn{PA}}$. We fit $f_1 (X)$ from the OS with a neural network, and $g_1 (X; \hat{\theta})$, $b_1 (X; \hat{\gamma})$, $h_1 (\tilde{X}; \hat{\beta})$ are fit from the trial sample ${\cal D}_1$ via polynomial ridge regression with 5-fold cross-validation.

We use trial sizes $n_1 \in \{200, 1000\}$ and $l_x^{\tn{FOM}_1} \in \{ 0.5, 0.2 \}$, where a smaller value leads $\tn{FOM}_1 (X,U)$ to change more quickly in response to $X$ ({\em i.e.}, consist of high order polynomials), thus resulting in a more complex outcome function $g_1 (X)$. We provide examples in \cref{app:ec}.
 
When learning $f_1(X)$ from the OS, we conceal $U$ and experiment with $(l_u^{\tn{PA}}, \alpha_u^{\tn{PA}}) \in \{ (\infty, 0), (0.5,0), (0.5,10) \}$. In the first setting, $P(A=1 \vert S=2, X, U)$ does not depend on $U$, and there is {\em no} hidden confounding, which is introduced when $l_u^{\tn{PA}}$ is changed to $0.5$. Finally setting $\alpha_u^{\tn{PA}}$ to $10$ increases the \enquote{weight} of $U$ in $P(A=1 \vert S=2,X,U)$, leading to a larger confounding bias.

The preceding sets of hyperparameters lead to $2 \times 2 \times 3 = 12$ combinations. For each combination, 100 different data-generating functions were sampled from the GPs, leading to 1200 distinct scenarios. For each scenario, 100 independent runs were made where a new trial sample ${\cal D}_1$ was generated, and estimates for $\mu_1$ were calculated. An average RMSE is calculated for each scenario over 100 runs and for each combination over 100 scenarios, presented in \Cref{fig:synthetic-rmse}.
\subsection{Discussion of Results}
We first discuss the results in Figure~\ref{fig:1} and focus on the advantages of our prediction-powered estimators over the baselines. We then investigate Figure~\ref{fig:2} which demonstrates why the augmented outcome modeling (AOM) is more robust than the additive bias correction (ABC) approach.

\paragraph{Using the OS Alone}
As the hidden confounding in the OS increases, $\hat{\mu}_1^{\tn{OS-OM}}$ in \eqref{eq:apo_est_os_naive}, which directly applies $f_1 (X)$ in the target population, suffers higher RMSE. Note that its performance does not improve with a larger trial size, as confounding bias does not result from a small sample size.

\paragraph{Using the Trial Sample Alone}
The performance of $\hat{\mu}_1^{\tn{OM}}$ in \eqref{eq:apo_est_rct} relies on fitting the outcome function $g_1 (X)$ from the trial sample accurately. Therefore, it incurs higher RMSE when the trial is small and $g_1 (X)$ is complex. The RMSE gets even worse when higher-order polynomials are used to fit $\hat{g}_1 (X)$ in a small trial, due to the quickly increasing variance term \eqref{eq:mse_p2} (see \Cref{app:bvt}).

\paragraph{Combining Trial and Observational Data}
Prediction-powered estimators $\hat{\mu}_{1}^{\tn{ABC}}$ and $\hat{\mu}_{1}^{\tn{AOM}}$ yield significant improvement over $\hat{\mu}_1^{\tn{OM}}$ when the trial is small, outcome function is complex, and the hidden confounding is small. This is because when $f_1 (X)$ accurately {\em estimates away} most of the complex structure in $g_1(X)$, the resulting bias function has a small norm, {\em i.e.}, $b_1 (X) \approx 0$ (see \Cref{app:ec} for some examples), and is more feasible to fit from the small trial sample and generalize to the target population.

Confounding in the OS leads to slightly worse RMSEs for $\hat{\mu}_{1}^{\tn{ABC}}$ and $\hat{\mu}_{1}^{\tn{AOM}}$, but they still compare favorably to $\hat{\mu}_1^{\tn{OM}}$. Note that a larger hidden confounding need not impede benefiting from $f_1(X)$, so long as $b_1 (X)$ consists of lower degree polynomials. The results in \Cref{fig:synthetic-rmse} are {\em averaged} over many scenarios, and we provide examples in \Cref{app:ec} where $b_1 (X)$ is also \enquote{complex} and no improvement over the baseline is achieved. Finally, when the trial is large enough to support fitting $g_1 (X)$ with a higher order polynomial, $\hat{\mu}_1^{\tn{OM}}$ perform on par with $\hat{\mu}_{1}^{\tn{ABC}}$ and $\hat{\mu}_{1}^{\tn{AOM}}$.

\paragraph{Which Prediction-powered Estimator is Better?}
For the experiments in \Cref{fig:synthetic-rmse}$, \hat{\mu}_{1}^{\tn{ABC}}$ and $\hat{\mu}_{1}^{\tn{AOM}}$ have similar performances. We demonstrate the robustness of $\hat{\mu}_{1}^{\tn{AOM}}$ in \Cref{fig:faxn}, where $f_1 (X)$ is just noise and $b_1 (X)$ is harder to estimate than $g_1 (X)$. While $\hat{\mu}_{1}^{\tn{ABC}}$ suffers high RMSE, $\hat{\mu}_{1}^{\tn{AOM}}$ simply ignores $f_1 (X)$ as a regressor and retains the performance of the baseline approach. 
\section{Concluding Remarks}
We investigated the statistical challenges of generalizing causal inferences from a randomized controlled trial to a target population whose characteristics differ from the trial. We showed how observational data could make generalization more statistically feasible without unrealistic assumptions. Through a diverse set of synthetic experiments, we verified the effectiveness of our methods. Future work includes investigating more flexible approaches to leverage observational data and exploring further experiment setups ({\em e.g.}, with different kernels to simulate specific real-world settings) to gain further insight into the potentials and limitations of integrating experimental and observational evidence.
\begin{figure}[t]
    \centering
    \includegraphics[width=\linewidth]{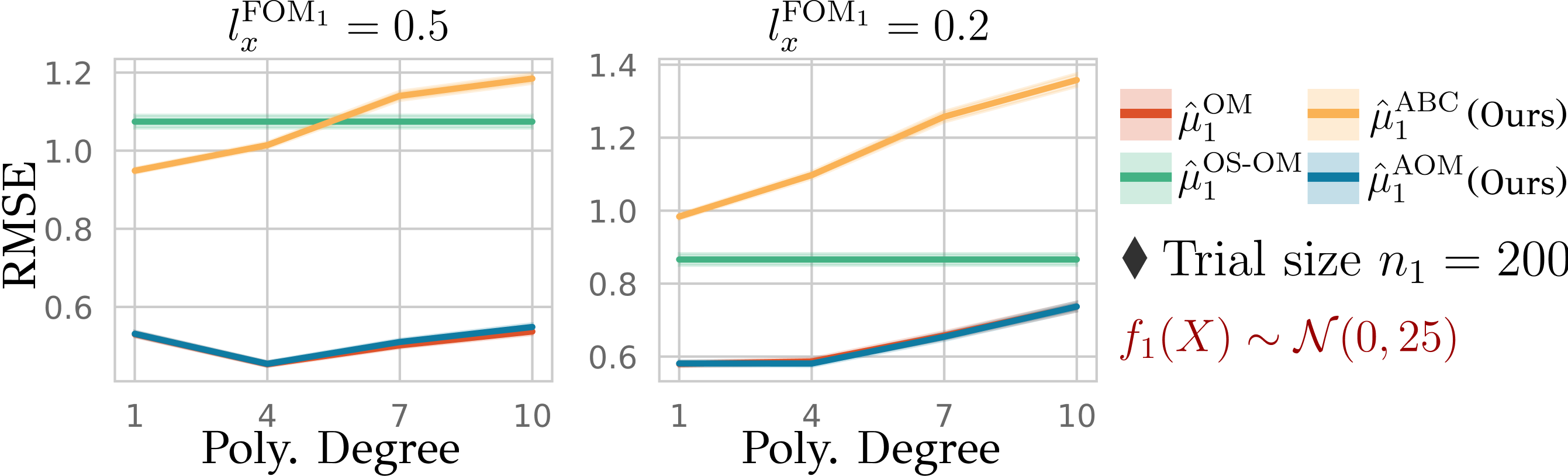}
    \vspace{-10pt}
    \caption{Convention same as \Cref{fig:synthetic-rmse}. The observational predictor is not trained on any data but generates i.i.d. noise for all $X$.}
    \label{fig:faxn}
    \vspace{-10pt}
\end{figure}

\section*{Acknowledgements}
The authors thank the anonymous reviewers for their thoughtful comments and contributions to our experimental results during the discussion phase, and to Sontag Lab members for insightful discussions. ID was supported by funding from the Eric and Wendy Schmidt Center at the Broad Institute of MIT and Harvard. This study was supported in part by Office of Naval Research Award No. N00014-21-1-2807. 
\section*{Impact Statement}
Causal inference is crucial in medicine. The present manuscript contributes to the rapidly growing field of integrating gold-standard data from trials and real-world evidence. We propose statistically valid methods that can leverage large-scale observational data to power better generalization of causal effects from a trial to a target population. There are many potential societal consequences of our work, none which we feel must be specifically highlighted here.

\bibliography{ref.bib}

\begin{thebibliography}{39}
\providecommand{\natexlab}[1]{#1}
\providecommand{\url}[1]{\texttt{#1}}
\expandafter\ifx\csname urlstyle\endcsname\relax
  \providecommand{\doi}[1]{doi: #1}\else
  \providecommand{\doi}{doi: \begingroup \urlstyle{rm}\Url}\fi

\bibitem[Angelopoulos et~al.(2023)Angelopoulos, Bates, Fannjiang, Jordan, and Zrnic]{angelopoulos2023prediction}
Angelopoulos, A.~N., Bates, S., Fannjiang, C., Jordan, M.~I., and Zrnic, T.
\newblock Prediction-powered inference.
\newblock \emph{Science}, 382\penalty0 (6671):\penalty0 669--674, 2023.
\newblock \doi{10.1126/science.adi6000}.

\bibitem[Bareinboim \& Pearl(2016)Bareinboim and Pearl]{bareinboim2016causal}
Bareinboim, E. and Pearl, J.
\newblock Causal inference and the data-fusion problem.
\newblock \emph{Proceedings of the National Academy of Sciences}, 113\penalty0 (27):\penalty0 7345--7352, 2016.

\bibitem[Chen et~al.(2021)Chen, Zhang, and Ye]{Chen2021-eo}
Chen, S., Zhang, B., and Ye, T.
\newblock Minimax rates and adaptivity in combining experimental and observational data.
\newblock \emph{arXiv preprint (2109.10522)}, September 2021.

\bibitem[Cheng \& Cai(2021)Cheng and Cai]{Cheng2021-sn}
Cheng, D. and Cai, T.
\newblock Adaptive combination of randomized and observational data.
\newblock \emph{arXiv preprint (2111.15012)}, November 2021.

\bibitem[Chernozhukov et~al.(2018)Chernozhukov, Chetverikov, Demirer, Duflo, Hansen, Newey, and Robins]{chernozhukov2018double}
Chernozhukov, V., Chetverikov, D., Demirer, M., Duflo, E., Hansen, C., Newey, W., and Robins, J.
\newblock Double/debiased machine learning for treatment and structural parameters, 2018.

\bibitem[Colnet et~al.(2024)Colnet, Mayer, Chen, Dieng, Li, Varoquaux, Vert, Josse, and Yang]{colnet2024causal}
Colnet, B., Mayer, I., Chen, G., Dieng, A., Li, R., Varoquaux, G., Vert, J.-P., Josse, J., and Yang, S.
\newblock Causal inference methods for combining randomized trials and observational studies: a review.
\newblock \emph{Statistical Science}, 39\penalty0 (1):\penalty0 165--191, 2024.

\bibitem[Dahabreh et~al.(2019)Dahabreh, Robertson, Tchetgen, Stuart, and Hern{\'a}n]{dahabreh2019generalizing}
Dahabreh, I.~J., Robertson, S.~E., Tchetgen, E.~J., Stuart, E.~A., and Hern{\'a}n, M.~A.
\newblock Generalizing causal inferences from individuals in randomized trials to all trial-eligible individuals.
\newblock \emph{Biometrics}, 75\penalty0 (2):\penalty0 685--694, June 2019.

\bibitem[Dahabreh et~al.(2020)Dahabreh, Robertson, Steingrimsson, Stuart, and Hernan]{dahabreh2020extending}
Dahabreh, I.~J., Robertson, S.~E., Steingrimsson, J.~A., Stuart, E.~A., and Hernan, M.~A.
\newblock Extending inferences from a randomized trial to a new target population.
\newblock \emph{Statistics in medicine}, 39\penalty0 (14):\penalty0 1999--2014, 2020.

\bibitem[De~Bartolomeis et~al.(2024{\natexlab{a}})De~Bartolomeis, Abad, Donhauser, and Yang]{de2024detecting}
De~Bartolomeis, P., Abad, J., Donhauser, K., and Yang, F.
\newblock Detecting critical treatment effect bias in small subgroups.
\newblock \emph{arXiv preprint arXiv:2404.18905}, 2024{\natexlab{a}}.

\bibitem[De~Bartolomeis et~al.(2024{\natexlab{b}})De~Bartolomeis, Martinez, Donhauser, and Yang]{de2024hidden}
De~Bartolomeis, P., Martinez, J.~A., Donhauser, K., and Yang, F.
\newblock Hidden yet quantifiable: A lower bound for confounding strength using randomized trials.
\newblock In \emph{International Conference on Artificial Intelligence and Statistics}, pp.\  1045--1053. PMLR, 2024{\natexlab{b}}.

\bibitem[Degtiar et~al.(2023)Degtiar, Layton, Wallace, and Rose]{degtiar2023conditional}
Degtiar, I., Layton, T., Wallace, J., and Rose, S.
\newblock Conditional cross-design synthesis estimators for generalizability in medicaid.
\newblock \emph{Biometrics}, 2023.

\bibitem[Demirel et~al.(2024)Demirel, De~Brouwer, Hussain, Oberst, Philippakis, and Sontag]{demirel2024benchmarking}
Demirel, I., De~Brouwer, E., Hussain, Z.~M., Oberst, M., Philippakis, A.~A., and Sontag, D.
\newblock Benchmarking observational studies with experimental data under right-censoring.
\newblock In \emph{International Conference on Artificial Intelligence and Statistics}, pp.\  4285--4293, 2024.

\bibitem[Forbes \& Dahabreh(2020)Forbes and Dahabreh]{forbes2020benchmarking}
Forbes, S.~P. and Dahabreh, I.~J.
\newblock Benchmarking observational analyses against randomized trials: a review of studies assessing propensity score methods.
\newblock \emph{Journal of general internal medicine}, 35:\penalty0 1396--1404, 2020.

\bibitem[Guo et~al.(2021)Guo, Wang, Ding, Wang, and Jordan]{guo2021multi}
Guo, W., Wang, S., Ding, P., Wang, Y., and Jordan, M.~I.
\newblock Multi-source causal inference using control variates.
\newblock \emph{arXiv preprint arXiv:2103.16689}, 2021.

\bibitem[Han et~al.(2023)Han, Shen, and Zubizarreta]{han2023multiply}
Han, L., Shen, Z., and Zubizarreta, J.
\newblock Multiply robust federated estimation of targeted average treatment effects.
\newblock \emph{Advances in Neural Information Processing Systems}, 36:\penalty0 70453--70482, 2023.

\bibitem[Hartman et~al.(2015)Hartman, Grieve, Ramsahai, and Sekhon]{Hartman2015-td}
Hartman, E., Grieve, R., Ramsahai, R., and Sekhon, J.~S.
\newblock From sample average treatment effect to population average treatment effect on the treated: combining experimental with observational studies to estimate population treatment effects.
\newblock \emph{Journal of the Royal Statistical Society. Series A,}, 178\penalty0 (3):\penalty0 757--778, June 2015.

\bibitem[Hatt et~al.(2022)Hatt, Berrevoets, Curth, Feuerriegel, and van~der Schaar]{hatt2022combining}
Hatt, T., Berrevoets, J., Curth, A., Feuerriegel, S., and van~der Schaar, M.
\newblock Combining observational and randomized data for estimating heterogeneous treatment effects.
\newblock \emph{arXiv preprint arXiv:2202.12891}, 2022.

\bibitem[Hernan \& Robins(2021)Hernan and Robins]{Hernan2021-yd}
Hernan, M.~A. and Robins, J.~M.
\newblock \emph{Causal Inference}.
\newblock CRC Press, Boca Raton, FL, February 2021.

\bibitem[Hern{\'a}n et~al.(2004)Hern{\'a}n, Hern{\'a}ndez-D{\'\i}az, and Robins]{hernan2004structural}
Hern{\'a}n, M.~A., Hern{\'a}ndez-D{\'\i}az, S., and Robins, J.~M.
\newblock A structural approach to selection bias.
\newblock \emph{Epidemiology}, pp.\  615--625, 2004.

\bibitem[Hussain et~al.(2022)Hussain, Oberst, Shih, and Sontag]{hussain2022falsification}
Hussain, Z., Oberst, M., Shih, M.-C., and Sontag, D.
\newblock Falsification before extrapolation in causal effect estimation.
\newblock \emph{arxiv preprint arXiv:2209.13708}, 2022.

\bibitem[Hussain et~al.(2023)Hussain, Shih, Oberst, Demirel, and Sontag]{hussain2023falsification}
Hussain, Z., Shih, M.-C., Oberst, M., Demirel, I., and Sontag, D.
\newblock Falsification of internal and external validity in observational studies via conditional moment restrictions.
\newblock In \emph{International Conference on Artificial Intelligence and Statistics}, pp.\  5869--5898, 2023.

\bibitem[Imbens \& Rubin(2015)Imbens and Rubin]{imbens2015causal}
Imbens, G.~W. and Rubin, D.~B.
\newblock \emph{Causal inference in statistics, social, and biomedical sciences}.
\newblock Cambridge University Press, 2015.

\bibitem[Johansson et~al.(2016)Johansson, Shalit, and Sontag]{johansson2016learning}
Johansson, F., Shalit, U., and Sontag, D.
\newblock Learning representations for counterfactual inference.
\newblock In \emph{International conference on machine learning}, pp.\  3020--3029. PMLR, 2016.

\bibitem[Kallus et~al.(2018)Kallus, Puli, and Shalit]{kallus2018removing}
Kallus, N., Puli, A.~M., and Shalit, U.
\newblock Removing hidden confounding by experimental grounding.
\newblock \emph{Advances in neural information processing systems}, 31, 2018.

\bibitem[Karlsson \& Krijthe(2024)Karlsson and Krijthe]{karlsson2024detecting}
Karlsson, R. and Krijthe, J.
\newblock Detecting hidden confounding in observational data using multiple environments.
\newblock \emph{Advances in Neural Information Processing Systems}, 36, 2024.

\bibitem[Kennedy(2023)]{kennedy2023towards}
Kennedy, E.~H.
\newblock Towards optimal doubly robust estimation of heterogeneous causal effects.
\newblock \emph{Electronic Journal of Statistics}, 17\penalty0 (2):\penalty0 3008--3049, 2023.

\bibitem[Liao et~al.(2023)Liao, Højbjerre-Frandsen, Hubbard, and Schuler]{liao2023transfer}
Liao, L.~D., Højbjerre-Frandsen, E., Hubbard, A.~E., and Schuler, A.
\newblock Prognostic adjustment with efficient estimators to unbiasedly leverage historical data in randomized trials.
\newblock \emph{arXiv preprint arXiv:2305.19180}, 2023.

\bibitem[{NICE}(2022)]{nice2022uk}
{NICE}.
\newblock Nice real-world evidence framework, 2022.
\newblock URL \url{https://www.nice.org.uk/corporate/ecd9/chapter/overview}.

\bibitem[Oberst et~al.(2023)Oberst, D'Amour, Chen, Wang, Sontag, and Yadlowsky]{oberst2023understanding}
Oberst, M., D'Amour, A., Chen, M., Wang, Y., Sontag, D., and Yadlowsky, S.
\newblock Understanding the risks and rewards of combining unbiased and possibly biased estimators, with applications to causal inference.
\newblock \emph{arXiv preprint arXiv:2205.10467}, 2023.

\bibitem[Rasmussen et~al.(2006)Rasmussen, Williams, et~al.]{rasmussen2006gaussian}
Rasmussen, C.~E., Williams, C.~K., et~al.
\newblock \emph{Gaussian processes for machine learning}, volume~1.
\newblock Springer, 2006.

\bibitem[Rosenman \& Owen(2021)Rosenman and Owen]{rosenman2021designing}
Rosenman, E.~T. and Owen, A.~B.
\newblock Designing experiments informed by observational studies.
\newblock \emph{Journal of Causal Inference}, 9\penalty0 (1):\penalty0 147--171, 2021.

\bibitem[Rosenman et~al.(2020)Rosenman, Basse, Owen, and Baiocchi]{rosenman2020combining}
Rosenman, E.~T., Basse, G., Owen, A.~B., and Baiocchi, M.
\newblock Combining observational and experimental datasets using shrinkage estimators.
\newblock \emph{Biometrics}, 2020.

\bibitem[Rothwell(2005)]{rothwell2005external}
Rothwell, P.~M.
\newblock External validity of randomised controlled trials:“to whom do the results of this trial apply?”.
\newblock \emph{The Lancet}, 365\penalty0 (9453):\penalty0 82--93, 2005.

\bibitem[Schuler et~al.(2021)Schuler, Walsh, Hall, Walsh, Fisher, for Alzheimer’s~Disease, Initiative, and Study]{schuler2021increasing}
Schuler, A., Walsh, D., Hall, D., Walsh, J., Fisher, C., for Alzheimer’s~Disease, C.~P., Initiative, A. D.~N., and Study, A. D.~C.
\newblock Increasing the efficiency of randomized trial estimates via linear adjustment for a prognostic score.
\newblock \emph{The International Journal of Biostatistics}, 18\penalty0 (2):\penalty0 329--356, 2021.

\bibitem[Shalit et~al.(2017)Shalit, Johansson, and Sontag]{shalit2017estimating}
Shalit, U., Johansson, F.~D., and Sontag, D.
\newblock Estimating individual treatment effect: generalization bounds and algorithms.
\newblock In \emph{International conference on machine learning}, pp.\  3076--3085. PMLR, 2017.

\bibitem[Stuart et~al.(2011)Stuart, Cole, Bradshaw, and Leaf]{stuart2011use}
Stuart, E.~A., Cole, S.~R., Bradshaw, C.~P., and Leaf, P.~J.
\newblock The use of propensity scores to assess the generalizability of results from randomized trials.
\newblock \emph{Journal of the Royal Statistical Society: Series A (Statistics in Society)}, 174\penalty0 (2):\penalty0 369--386, 2011.

\bibitem[Wainwright(2019)]{wainwright2019high}
Wainwright, M.~J.
\newblock \emph{High-dimensional statistics: A non-asymptotic viewpoint}, volume~48.
\newblock Cambridge University Press, 2019.

\bibitem[Yang \& Ding(2019)Yang and Ding]{yang2019combining}
Yang, S. and Ding, P.
\newblock Combining multiple observational data sources to estimate causal effects.
\newblock \emph{Journal of the American Statistical Association}, 2019.

\bibitem[Yang et~al.(2023)Yang, Gao, Zeng, and Wang]{yang2023elastic}
Yang, S., Gao, C., Zeng, D., and Wang, X.
\newblock Elastic integrative analysis of randomised trial and real-world data for treatment heterogeneity estimation.
\newblock \emph{Journal of the Royal Statistical Society Series B: Statistical Methodology}, 85\penalty0 (3):\penalty0 575--596, 2023.

\end{thebibliography}
\bibliographystyle{icml2024}

%%%%%%%%%%%%%%%%%%%%%%%%%%%%%%%%%%%%%%%%%%%%%%%%%%%%%%%%%%%%%%%%%%%%%%%%%%%%%%%
%%%%%%%%%%%%%%%%%%%%%%%%%%%%%%%%%%%%%%%%%%%%%%%%%%%%%%%%%%%%%%%%%%%%%%%%%%%%%%%
% APPENDIX
%%%%%%%%%%%%%%%%%%%%%%%%%%%%%%%%%%%%%%%%%%%%%%%%%%%%%%%%%%%%%%%%%%%%%%%%%%%%%%%
%%%%%%%%%%%%%%%%%%%%%%%%%%%%%%%%%%%%%%%%%%%%%%%%%%%%%%%%%%%%%%%%%%%%%%%%%%%%%%%
\newpage
\appendix
\onecolumn
\section{Proofs and Additional Results}
\subsection{Deferred Proofs}
\varLb*
\begin{proof}
Let us denote by ${\cal D}_{s=1,a,k} \subseteq {\cal D}_1$ the set of trial participants in group $k$ that received treatment $A=a$, with size $n_{s=1,a,k} = \sum_{i=1}^n \ind{X_i=k, S_i = 1, A_i=a}$. We estimate the outcome model as
\begin{equation} \label{eq:group_sample_mean}
    \hat{g}_a (X = k) = \frac{\sum_{i=1}^n \ind{X_i=k, S_i = 1, A_i=a} Y_i}{n_{s=1,a,k}}.
\end{equation}
$\hat{g}_a (X=k)$ is simply the sample mean of outcomes $Y_i$ in ${\cal D}_{s=1,a,k}$ and we have
\begin{align}
    \E \left[ \hat{g}_a (X=k) \right] 
    &= \E \left[ Y \mid X=k,S=1,A=a \right] \nonumber \\
    &= \E \left[ Y^a \mid X=k,S=1,A=a \right] \nonumber\\
    &= \E \left[ Y^a \mid X=k,S=1 \right] \nonumber \\
    &= \E \left[ Y^a \mid X=k,S=0 \right] \label{eq:lem_varLb_1}
\end{align}
where the first equality follows from the unbiasedness of the sample mean and the rest from Assumptions \ref{asm:cons}, \ref{asm:nuc}, and \ref{asm:nuem}, respectively.
When $X$ is categorical and $\hat{g}_a (X)$ is estimated via \eqref{eq:group_sample_mean}, \eqref{eq:apo_est_rct} admits the following equivalent expression.
\begin{equation} \label{eq:apo_rct_est_categ}
    \hat{\mu}_{a}^{\tn{OM}} = \sum_{k=1}^K \hat{p}_{s=0} (k) \hat{g}_a (k).
\end{equation}
where $\hat{p}_{s=0}(k) = \frac{\sum_{i=1}^n \ind{S_i = 0, X_i = k}}{\sum_{i=1}^n \ind{S_i = 0}}$ is the proportion of patients in the target sample ${\cal D}_0$ from group $k$. 

Note that the target ${\cal D}_0$ and trial ${\cal D}_1$ samples are disjoint of each other. Since $\hat{p}_{s=0} (k)$ is effectively calculated from the observations in the target sample ${\cal D}_0$ only, and  similarly $\hat{g}_a (k)$ from ${\cal D}_1$ only, $\hat{p}_{s=0} (k)$ and $\hat{g}_a (k)$ are independent. Following \eqref{eq:apo_rct_est_categ}, we can then write
\begin{align}
    \E[\hat{\mu}_{a}^{\tn{OM}}] &= \E \left[ \sum_{k=1}^K \hat{p}_{s=0} (k) \hat{g}_a (k) \right] \nonumber \\
    &=\sum_{k=1}^K \E \left[\hat{p}_{s=0} (k) \hat{g}_a (k) \right] \nonumber \\
    &=\sum_{k=1}^K \E \left[ \hat{p}_{s=0} (k) \right] \E \left[ \hat{g}_a (k) \right] \nonumber \\
    &=\sum_{k=1}^K p_{s=0} (k) \E \left[ Y^a \mid X=k,S=0 \right] \label{eq:unb_sp} \\
    &=\E[Y^a \mid S = 0] \tag{law of total expectation} \\
    &=\mu_a, \label{eq:unb_cat} 
\end{align}
where \eqref{eq:unb_sp} follows from the unbiasedness of sample proportion and \eqref{eq:lem_varLb_1}. 

Next, by the law of total variance we write
\begin{align}
    \V\left(\hat{\mu}_{a}^{\tn{OM}}\right) &= \E_{{\cal D}_0} \left[\V_{{\cal D}_1} (\hat{\mu}_{a}^{\tn{OM}} \mid {\cal D}_0)\right] + \V_{{\cal D}_0} \left(\E_{{\cal D}_1} \left[\hat{\mu}_{a}^{\tn{OM}}\mid {\cal D}_0\right]\right). \label{eq:lotv} 
\end{align}
We start with the first term.
\begin{align}
    \E_{{\cal D}_0} \left[\V_{{\cal D}_1} (\hat{\mu}_{a}^{\tn{OM}} \mid {\cal D}_0)\right] &=\E_{{\cal D}_0} \left[ \V_{{\cal D}_1} \left( \sum_{k=1}^K \hat{p}_{s=0} (k) \hat{g}_a (k) \Big\lvert {\cal D}_0 \right) \right] \nonumber \\
    & =\E_{{\cal D}_0} \left[  \sum_{k=1}^K \hat{p}_{s=0}^2 (k) \V_{{\cal D}_1} \left( \hat{g}_a (k) \big\lvert {\cal D}_0 \right) \right] \label{eq:lem_varLb_2}\\
    & = \sum_{k=1}^K \E_{{\cal D}_0} \left[ \hat{p}_{s=0}^2 (k) \V_{{\cal D}_1} \left( \hat{g}_a (k) \big\lvert {\cal D}_0 \right) \right] \nonumber \\
    & = \sum_{k=1}^K \E_{{\cal D}_0} \left[ \hat{p}_{s=0}^2 (k) \right]  \V_{{\cal D}_1} \left( \hat{g}_a (k) \right) \label{eq:lem_varLb_3} \\
    & = \sum_{k=1}^K \left( \E _{{\cal D}_0}\left[ \hat{p}_{s=0} (k) \right]^2 + \underbrace{\V_{{\cal D}_0} \left( \hat{p}_{s=0} (k) \right)}_{\xrightarrow{n_0 \to \infty} 0 } \right) \V_{{\cal D}_1} \left( \hat{g}_a (k) \right) \label{eq:spav} \\
    & \approx \sum_{k=1}^K p^2_{s=0} (k)\frac{\sigma^2_{a,k}}{n_{s=1,a,k}}, \label{eq:var_cat}
\end{align}
where \eqref{eq:lem_varLb_2} holds since the participants in different groups are independent of each other and $\hat{p}_{s=0}^2 (k)$ is no longer random after conditioning on ${\cal D}_0$. Similar to above, \eqref{eq:lem_varLb_3} holds since $\hat{g}_a (k)$ is independent of the target sample ${\cal D}_0$ and therefore $\hat{p}_{s=0}^2 (k)$. \eqref{eq:spav} follows after writing the variance of the sample proportion 
\begin{equation} \label{eq:spav2}
    \V_{{\cal D}_0} \left( \hat{p}_{s=0} (k) \right) = \frac{p_{s=0} (k) \left(1 - p_{s=0} (k) \right)}{n_0}.
\end{equation}
For the second term we write
\begin{align}
    \V_{{\cal D}_0} \left(\E_{{\cal D}_1} \left[\hat{\mu}_{a}^{\tn{OM}}\mid {\cal D}_0\right]\right) &= \V_{{\cal D}_0} \left( \E_{{\cal D}_1} \left[ \sum_{k=1}^K \hat{p}_{s=0} (k) \hat{g}_a (k) \mid {\cal D}_0 \right] \right) \nonumber \\
    &= \V_{{\cal D}_0} \left( \sum_{k=1}^K \hat{p}_{s=0} (k) \E_{{\cal D}_1} \left[ \hat{g}_a (k) \mid {\cal D}_0 \right] \right) \nonumber \\
    &= \V_{{\cal D}_0} \left( \sum_{k=1}^K \hat{p}_{s=0} (k) \E_{{\cal D}_1} \left[ \hat{g}_a (k) \right] \right) \nonumber \\
    &= \V_{{\cal D}_0} \left( \sum_{k=1}^K \hat{p}_{s=0} (k)  \E \left[ Y^a \mid X=k,S=0 \right]  \right) \nonumber \\
    &= \sum_{k=1}^K \underbrace{\V_{{\cal D}_0} \left( \hat{p}_{s=0} (k) \right)}_{\xrightarrow{n_0 \to \infty} 0 }  \E \left[ Y^a \mid X=k,S=0 \right]^2  \label{eq:asvar2} \\
    &\approx 0, \label{eq:asvar0}
\end{align}
where \eqref{eq:asvar2} follows again from \eqref{eq:spav2}. Combining \eqref{eq:lotv}, \eqref{eq:var_cat}, and \eqref{eq:asvar0}, we have, as $n_0$ goes to infinity,
\begin{equation}\label{eq:asvar1}
    \V\left(\hat{\mu}_{a}^{\tn{OM}}\right) \approx \sum_{k=1}^K p^2_{s=0} (k)\frac{\sigma^2_{a,k}}{n_{s=1,a,k}}
\end{equation}
Finally, we have
\begin{align*}
    \E[(\hat{\mu}_{a}^{\tn{OM}} - \mu_a)^2] &= \underbrace{\E[(\hat{\mu}_{a}^{\tn{OM}} - \mu_a)]^2}_{0~\tn{by}~\eqref{eq:unb_cat}} + \V\left(\hat{\mu}_{a}^{\tn{OM}}\right) \\
    &\approx \sum_{k=1}^K p^2_{s=0} (k)\frac{\sigma^2_{a,k}}{n_{s=1,a,k}}, \tag{by \eqref{eq:asvar1}}
\end{align*}
and we are done.
\end{proof}
\mseLem*
\begin{proof}
    \begin{align} 
    \E[ \left( \hat{\mu}_{a}^{\tn{OM}} - \mu_a \right)^2 ] = \underbrace{\left(\E\left[ \hat{\mu}_{a}^{\tn{OM}} \right]  - \mu_a \right)^2}_{\tn{Bias}^2} + \underbrace{\V (\hat{\mu}_{a}^{\tn{OM}})}_{\tn{Variance}}. \label{eq:lemmse}
    \end{align}
    We will start with the bias term. Note that, under Assumptions \ref{asm:cons}-\ref{asm:pos}, we have
    \begin{align}
        \mu_a 
        &= \E[ \E[ Y \mid X, S=1, A=a] \mid S=0] \tag{see \eqref{eq:apo_iden}} \nonumber \\
        &= \E[ g_a (X)\mid S=0] \tag{by definition, see \eqref{eq:mof}} \nonumber \\
        &= \E_{X \sim P_0} [ g_a (X)]. \label{eq:train1}
    \end{align}
    with a manipulation of notation at the final step. Once $\hat{\theta}$ is estimated from the trial sample ${\cal D}_1$ via an algorithm ${\cal A}$, $\hat{\mu}_{a}^{\tn{OM}}$ is calculated by effectively taking a sample mean of $g_a(X_i; \hat{\theta})$ for the covariates $X_i$ in the target sample ${\cal D}_0$. We can write,
    %
    % \begin{align*}
    %     &\E[ \hat{\mu}^{a} ] \\
    %     &= \E \left[ \frac{1}{n_0} \sum_{i=1}^n \ind{S_i = 0} g_a (X_i; \hat{\theta}) \right] \\
    %     &= \E \left[ \frac{1}{n_0} \sum_{i \in {\cal D}_0} g_a (X_i; \hat{\theta}) \right] \\
    %     &= \E \left[ g_a (X; \hat{\theta}) \mid S = 0 \right] \\
    %     &= \E_{{\cal D}_0} \left[ \E_{{\cal D}_1} \left[ \frac{1}{n_0} \sum_{i \in {\cal D}_0} g_a (X_i; \hat{\theta}) \Big\lvert {\cal D}_0 \right]\right] \\
    %     &= \E_{{\cal D}_0} \left[ \frac{1}{n_0} \sum_{i \in {\cal D}_0} \E_{{\cal D}_1} \left[ g_a (X_i; \hat{\theta}) \Big\lvert {\cal D}_0 \right]\right] \\
    %     &= \E \left[ \frac{1}{n_0} \sum_{i \in {\cal D}_0} g_a (X_i; \hat{\theta}) \right] \\
    %     &= \E \left[ \sum_{i=1}^n \frac{\ind{S_i = 0} g_a (X_i; \hat{\theta})}{\sum_{j=1}^n \ind{S_j = 0}} \right] \\
    %     &= n \E \left[ \frac{\ind{S = 0} g_a (X; \hat{\theta})}{n_0} \right] \tag{i.i.d.} \\
    %     &= n \E \left[ \frac{g_a (X; \hat{\theta})}{n_0} \mid S=0\right] P(S=0) \\
    %     =&\E_{\hat{\theta} \sim {\cal A} (P_1), X_i \sim P_0} \left[ \frac{1}{n_0} \sum_{X_i \in {\cal D}_0} g_a (X_i; \hat{\theta})  \right] \\
    %     =&\E_{\hat{\theta} \sim {\cal A} (P_1), X \sim P_0} \left[ g_a (X; \hat{\theta})\right]
    % \end{align*}
    %
   \begin{align}
        \E[ \hat{\mu}_{a}^{\tn{OM}} ] 
        &= \E \left[ \frac{1}{n_0} \sum_{i=1}^n \ind{S_i = 0} g_a (X_i; \hat{\theta}) \right] \nonumber \\
        &= \E \left[ \frac{1}{n_0} \sum_{X_i \in {\cal D}_0} g_a (X_i; \hat{\theta}) \right] \nonumber \\
        &= \E_{\hat{\theta} \sim P_1} \left[ \E_{{\cal D}_0} \left[ \frac{1}{n_0} \sum_{X_i \in {\cal D}_0} g_a (X_i; \hat{\theta}) \Big\lvert \hat{\theta} \right] \right] \nonumber \\
        &= \E_{\hat{\theta} \sim P_1} \left[ \E_{{\cal D}_0} \left[ \frac{1}{n_0} \sum_{X_i \in {\cal D}_0} g_a (X_i; \hat{\theta})  \right] \right] \nonumber \\
        &= \E_{\hat{\theta} \sim P_1} \left[ \E_{X \sim P_0} \left[ g_a (X; \hat{\theta})  \right] \right] \nonumber \\
        &= \E_{X \sim P_0} \left[ \E_{\hat{\theta} \sim P_1} \left[ g_a (X; \hat{\theta})  \right] \right]. \label{eq:train2}
    \end{align}
    since $X_i \in {\cal D}_0$ are i.i.d and independent of $\hat{\theta}$. Combining \eqref{eq:train1} and \eqref{eq:train2} we write
    \begin{align}
        \tn{Bias} 
        &=\E\left[ \hat{\mu}_{a}^{\tn{OM}} \right]  - \mu_a \nonumber \\
        &=\E_{X \sim P_0} \Big[ \E_{\hat{\theta} \sim {\cal A} (P_1)}[ g_a (X; \hat{\theta})] - g_a (X) \Big]. \label{eq:biasderived}  
    \end{align}
    We continue with the variance term by invoking the law of total variance.
    \begin{align}
        \tn{Variance} 
        &=\V \left(\hat{\mu}_{a}^{\tn{OM}} \right) \nonumber \\
        &=\V_{\hat{\theta} \sim {\cal A} (P_1), {\cal D}_0}\left( \frac{1}{n_0} \sum_{X_i \in {\cal D}_0} g_a (X_i; \hat{\theta}) \right) \nonumber \\
        &=\V_{\hat{\theta} \sim {\cal A} (P_1)} \left( \E_{{\cal D}_0} \left[ \frac{1}{n_0} \sum_{X_i \in {\cal D}_0} g_a (X_i; \hat{\theta})\Bigg\lvert \hat{\theta} \right]  \right) \nonumber \\
        &\hspace{50pt}+\underbrace{\E_{\hat{\theta} \sim {\cal A} (P_1)} \left[ \underbrace{\V_{{\cal D}_0}  \left( \frac{1}{n_0} \sum_{X_i \in {\cal D}_0}  g_a (X_i; \hat{\theta}) \Bigg\lvert \hat{\theta} \right)}_{\xrightarrow{n_0 \to \infty} 0 } \right]}_{\approx 0} \nonumber \\
        &\approx \V_{\hat{\theta} \sim {\cal A} (P_1)}  \left(  \E_{{\cal D}_0} \left[ \frac{1}{n_0} \sum_{X_i \in {\cal D}_0} g_a (X_i; \hat{\theta}) \right]  \right) \nonumber \\
        &=\V_{\hat{\theta} \sim {\cal A} (P_1)}\left( \E_{X \sim P_0} \left[ g_a (X; \hat{\theta}) \right] \right), \label{eq:varderived}
    \end{align}
    where in the third equality, the variance of the sample mean vanishes for large $n_0$. Last two steps follow since the target sample ${\cal D}_0$ and $\hat{\theta}$ are independent and $X_i \in {\cal D}_0$ are i.i.d. Plugging \eqref{eq:biasderived} and \eqref{eq:varderived} into the definition of the MSE in \eqref{eq:lemmse}, we are done.
\end{proof}
\altiden*
\begin{proof}
    Recall that we have
    \begin{equation} \label{eq:alt_iden_sup}
        \mu_a = \E \left[ f_a(X) \mid S=0 \right] - \E \left[f_a(X) - Y^a \mid S=0 \right].
    \end{equation}
    Note that the prediction model $f_a$ is fixed. Therefore we have
    \begin{equation}
        \E [f_a(X) \mid S=0] = \E_{X \sim P_0} [f_a(X)], \label{eq:lai1}
    \end{equation}
    which is only a change of notation. Further, conditioned on $X$, $f_a(X)$ is no more random, We can then write
    \begin{align}
        \E [f_a(X) - Y^a \mid S=0]
        &= \E_{X \sim P_0} \big[ \E[ f_a(X) - Y^a \mid X, S=0] \big] \nonumber \\
        &= \E_{X \sim P_0} \big[ \E[ f_a(X) - Y^a \mid X, S=1] \big] \label{eq:lai2} \\ 
        &= \E_{X \sim P_0} \big[ \E[ f_a(X) - Y^a \mid X, S=1, A=a] \big] \label{eq:lai3} \\ 
        &= \E_{X \sim P_0} \big[ \E[ f_a(X) - Y \mid X, S=1, A=a] \big] \label{eq:lai4} \\ 
        &= \E_{X \sim P_0} \big[ \E[ Z \mid X, S=1, A=a] \big], \label{eq:lai5}     
    \end{align}
    where \eqref{eq:lai2} is due to Assumptions~\ref{asm:nuem} and \ref{asm:pos}, \eqref{eq:lai3} is due to Assumptions \ref{asm:nuc} and \ref{asm:pot}, and \eqref{eq:lai4} is due to Assumption~\ref{asm:cons}.

    Combining \eqref{eq:alt_iden_sup}, \eqref{eq:lai1}, and \eqref{eq:lai5} completes the proof.
\end{proof}
\ppommse*
\begin{proof}
    We have, by \Cref{lemma:alt_iden},
    \begin{align}
        \mu_a &= \E[f_a(X) - b_a(X) \mid S=0] \nonumber \\
        &= \E[f_a(X) - b_a(X) \mid S=0]. \label{eq:ppom0}
    \end{align}
    where
    \begin{equation*}
        b_a(X) = \E[ f_a(X) - Y \mid X, S=1, A=a].
    \end{equation*}
    We consider a parametric estimator $b_a(X; \hat{\gamma})$ where $\gamma$ is estimated from the instance-wise prediction errors $f_a(X) - Y$ in the trial sample ${\cal D}_1$.
    
    We will follow the same steps in the proof of \Cref{theorem:mseLem} for the most part.
    \begin{align} 
        \E[ \left( \hat{\mu}_{a}^{\tn{ABC}} - \mu_a \right)^2 ] = \underbrace{\left(\E\left[ \hat{\mu}_{a}^{\tn{ABC}} \right]  - \mu_a \right)^2}_{\tn{Bias}^2} + \underbrace{\V (\hat{\mu}_{a}^{\tn{ABC}})}_{\tn{Variance}}. \label{eq:ppom1}
    \end{align}
    We have
    \begin{align}
        \E[ \hat{\mu}_{a}^{\tn{ABC}} ] 
        &= \E \left[ \frac{1}{n_0} \sum_{i=1}^n \ind{S_i = 0}  \big(f_a (X_i) - b^a(X_i; \hat{\gamma}) \big) \right] \nonumber \\
        &= \E \left[ \frac{1}{n_0} \sum_{X_i \in {\cal D}_0} f_a (X_i) - b^a(X_i; \hat{\gamma}) \right]. \label{eq:ppom2}
    \end{align}
    Since the sample mean is unbiased, it follows that
    \begin{equation}
        \E \left[ \frac{1}{n_0} \sum_{X_i \in {\cal D}_0} f_a (X_i) \right] = \E_{X \sim P_0} \left[ f_a(X) \right]. \label{eq:ppom3}
    \end{equation}
    Next, via the same machinery that derives \eqref{eq:train2}, we have
    \begin{align}
        \E \left[ \frac{1}{n_0} \sum_{X_i \in {\cal D}_0} b^{a} (X_i, \hat{\gamma}) \right] 
        = \E_{X \sim P_0} \Big[ \E_{\hat{\gamma} \sim P_1} \left[ b_a (X; \hat{\gamma}) \right] \Big]. \label{eq:ppom4}
    \end{align}
    By \eqref{eq:ppom2}, \eqref{eq:ppom3}, and \eqref{eq:ppom4}, we observe
    \begin{align}
        \E[ \hat{\mu}_{a}^{\tn{ABC}} ] 
        = \E_{X \sim P_0} \Big[ f_a(X) - \E_{\hat{\gamma} \sim P_1} \left[ b_a (X; \hat{\gamma}) \right] \Big], \label{eq:ppom5}
    \end{align}
    which leads to, in combination with \eqref{eq:ppom0}
    \begin{align}
        \tn{Bias} &= \E\left[ \hat{\mu}_{a}^{\tn{ABC}} \right]  - \mu_a \nonumber \\
        &=\E_{X \sim P_0} \Big[ b_a(X) - \E_{\hat{\gamma} \sim P_1} \left[ b_a (X; \hat{\gamma}) \right] \Big]. \label{eq:ppom6}
    \end{align}
    We continue with the variance term.
    \begin{align}
        \tn{Variance} &=\V \left(\hat{\mu}_{a}^{\tn{ABC}} \right) \nonumber \\
        &=\V \left( \frac{1}{n_0} \sum_{X_i \in {\cal D}_0} f_a (X_i) - b^a(X_i; \hat{\gamma}) \right) \nonumber \\
        &=\underbrace{\V_{{\cal D}_0} \left( \frac{1}{n_0} \sum_{X_i \in {\cal D}_0} f_a (X_i) \right)}_{\xrightarrow{n_0 \to \infty} 0 } \nonumber \\
        &\hspace{20pt}+ \V_{\hat{\gamma} \sim {\cal A} (P_1), {\cal D}_0} \left( \frac{1}{n_0} \sum_{X_i \in {\cal D}_0} b^{a} (X_i; \hat{\gamma}) \right)  \nonumber \\
        &\approx \V_{\hat{\gamma} \sim {\cal A} (P_1)}\left( \E_{X \sim P_0} \left[ b_a (X; \hat{\gamma}) \right] \right). \label{eq:ppom7}
    \end{align}
    The decomposition in third equality is due to the independence of the models predictions $f_a(X)$ and errors $f_a(X) - Y$ ($\hat{\gamma}$ is derived using the latter only). Finally, \eqref{eq:ppom7} follows through the same machinery that derives \eqref{eq:varderived}.
    
    We are done after plugging \eqref{eq:ppom6} and \eqref{eq:ppom7} into \eqref{eq:ppom1}.
\end{proof}
\subsection{Polynomial Ridge Regression} \label{app:prr}
We consider polynomial ridge regression in the trial sample ${\cal D}_1$  using Legendre polynomials up to degree $d'$, to fit the outcome model and bias model estimates, $\hat{g}_a$ and $\hat{b}_a$ which results in the following fits with an appropriately chosen penalty parameter $\lambda$ \cite{wainwright2019high}.
\begin{align}
    \hat{g}_a \in \argmin_{g \in {\cal F }(d')} \left \{ \frac{1}{n_1} \sum_{i=1}^{n_1} (Y_i - g(X_i))^2 \right \}, \label{eq:oracle_est_ga} \\
    \hat{b}_a \in \argmin_{b \in {\cal F}(d')} \left \{ \frac{1}{n_1} \sum_{i=1}^{n_1} (Z_i - b(X_i))^2 \right \}. \label{eq:oracle_est_ba}
\end{align}
where
\begin{equation} \label{eq:class_of_poly}
    {\cal F} (d') \coloneqq \left\{ \sum_{k=0} \beta_k \phi_k (X) \mid \sum_{k=0}^{d'} \beta^2_k \leq 1 \right\},
\end{equation}
is the class of polynomials up to degree $d'$ with bounded norm. The results then follow from the oracle inequalities derived for the orthogonal basis approximation problem in Example 13.14, Section 13.3 of \cite{wainwright2019high}.
\subsection{MSE Approximation for the Augmented Outcome Modeling Approach} \label{app:aomproof}
\begin{restatable}[]{theorem}{aommse}
    \label{theorem:aommse}
        Suppose that Assumptions~\ref{asm:cons}-\ref{asm:pos} hold. For large $n_0$, the MSE of $\hat{\mu}_{a}^{\tn{AOM}}$ in \eqref{eq:apo_est_com} can be approximated as
    \begin{align} 
        \E[(\hat{\mu}_{a}^{\tn{AOM}} - \mu_a)^2] \approx \E_{X \sim P_0} \big[ \E_{\hat{\beta} \sim {\cal A} (P_1)}[ h_a (\tilde{X}; \hat{\beta})] - h_a (\tilde{X}) \big]^2 +\V_{\hat{\beta} \sim {\cal A} (P_1)}\big( \E_{X \sim P_0} [ h_a (\tilde{X}; \hat{\beta}) ] \big). \label{eq:aom_mse_p2}
    \end{align}
\end{restatable}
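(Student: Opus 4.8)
The plan is to recognize that \Cref{theorem:aommse} is not a genuinely new computation but a direct instantiation of the argument already carried out for \Cref{theorem:mseLem}. The key observation, stated in \Cref{sssec:reg-est-pa}, is that the augmented outcome function satisfies $h_a(\tilde{X}) = h_a(X, f_a(X)) = g_a(X)$, and by \Cref{lemma:alt_iden_2} we have the identification $\mu_a = \E_{X \sim P_0}[h_a(\tilde{X})]$. The estimator $\hat{\mu}_a^{\tn{AOM}}$ in \eqref{eq:apo_est_com} has exactly the same structure as $\hat{\mu}_a^{\tn{OM}}$ in \eqref{eq:apo_est_rct}, with the parametric fit $h_a(\tilde{X}_i; \hat{\beta})$ replacing $g_a(X_i; \hat{\theta})$. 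Crucially, the estimand is still $\mu_a = \E_{X \sim P_0}[h_a(\tilde{X})]$, so the bias-variance decomposition will be governed by how well $h_a$ is estimated, rather than $g_a$.

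First I would write the standard bias-variance decomposition $\E[(\hat{\mu}_a^{\tn{AOM}} - \mu_a)^2] = (\E[\hat{\mu}_a^{\tn{AOM}}] - \mu_a)^2 + \V(\hat{\mu}_a^{\tn{AOM}})$, mirroring \eqref{eq:lemmse}. For the bias term, I would compute $\E[\hat{\mu}_a^{\tn{AOM}}]$ by conditioning on $\hat{\beta}$ and using that the target covariates $X_i \in {\cal D}_0$ are i.i.d. and independent of $\hat{\beta}$ (which is fit from ${\cal D}_1$), exactly reproducing the chain in \eqref{eq:train2} to obtain $\E[\hat{\mu}_a^{\tn{AOM}}] = \E_{X \sim P_0}[\E_{\hat{\beta} \sim {\cal A}(P_1)}[h_a(\tilde{X}; \hat{\beta})]]$. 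Subtracting $\mu_a = \E_{X \sim P_0}[h_a(\tilde{X})]$ yields the first term in \eqref{eq:aom_mse_p2}. For the variance term, I would invoke the law of total variance as in \eqref{eq:varderived}, noting that the inner conditional variance of the sample mean over ${\cal D}_0$ vanishes as $n_0 \to \infty$, leaving $\V_{\hat{\beta} \sim {\cal A}(P_1)}(\E_{X \sim P_0}[h_a(\tilde{X}; \hat{\beta})])$.

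One subtlety worth flagging, which is the only place the augmented setting differs from \Cref{theorem:mseLem}, is that the augmented covariate $\tilde{X}$ contains the deterministic feature $f_a(X)$. Since $f_a$ is a fixed function (learned once on the observational sample, which is treated as given), $\tilde{X}_i$ is a deterministic transformation of $X_i$, so the independence of $\tilde{X}_i \in {\cal D}_0$ from $\hat{\beta}$ and the i.i.d. structure both carry over without complication, and the integration over $P_0$ is understood as integration over the induced distribution of $\tilde{X}$. I would note this explicitly so the reader sees that no additional randomness from $f_a$ enters the decomposition. Given all this, I do not expect a real obstacle; the main ``work'' is just verifying that every step in the proof of \Cref{theorem:mseLem} is insensitive to replacing $(g_a, \hat{\theta})$ with $(h_a, \hat{\beta})$ and $X$ with $\tilde{X}$, which is why the statement can be deferred to the appendix as a near-verbatim repetition.
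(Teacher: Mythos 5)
Your proposal is correct and matches the paper exactly: the paper's own proof of \Cref{theorem:aommse} is a one-line remark that it follows from the same steps as the proof of \Cref{theorem:mseLem}, which is precisely your argument of substituting $(h_a, \hat{\beta}, \tilde{X})$ for $(g_a, \hat{\theta}, X)$. Your additional observation that $\tilde{X}_i$ is a deterministic transformation of $X_i$ (since $f_a$ is fixed), so the i.i.d.\ and independence structure carry over, is a worthwhile clarification that the paper leaves implicit.
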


\begin{proof}
    The proof follows from the same steps in the proof of \Cref{theorem:mseLem}.
\end{proof}
\subsection{Doubly-Robust Estimation} \label{app:drest}
In order to leverage the prognostic model $f_a(X)$ in the analysis, we can proceed with two identifications of $\mu_a$, \eqref{eq:altidenlemma} and \eqref{eq:apo_iden_3}, for which we considered estimators based {\em only} on regression functions (\eqref{eq:ppom} and \eqref{eq:apo_est_com}). However, in practice, we can directly use the so-called doubly-robust (DR) estimators for \eqref{eq:altidenlemma} and \eqref{eq:apo_iden_3}, which have several desirable properties.

In addition to a regression function component, DR estimators also have {\em weighting} function components, which, in our case, are the probability of trial enrollment, $P(S=1 \mid X)$, and the probability of treatment assignment in the trial, $P(A=a \mid X, S=1)$. Estimators based only on {\em weighting} models are also available but will not be covered here in the interest of space. \cite{dahabreh2020extending} derives a generic DR estimator for the functional
\begin{equation} \label{eq:func}
    \E_{X \sim P_0}[ \E[ Y \mid X, S=1, A=a]],
\end{equation}
which we can directly adopt and use to estimate \eqref{eq:apo_iden_3} and the second term of \eqref{eq:altidenlemma}. Estimating the first term of \eqref{eq:altidenlemma} remains unchanged as the average of predictions $f_a(X)$ in the target sample. We make the following definitions.
\begin{align}
    p &= P(S=1). \label{eq:marg_tp} \\
    p(X) &= P(S=1 \mid X). \label{eq:cond_tp} \\
    \pi_a(X) &= P(A=a \mid X, S=1), \label{eq:prop_tr}
\end{align}
and denote by $\hat{p}$, $\hat{p} (X)$, and $\hat{\pi}_a (X)$ their estimates. Note that in order to use DR estimators, one now needs to fit those functions using the composite sample ${\cal D}$. Next we give the DR estimator for \eqref{eq:altidenlemma} and \eqref{eq:apo_iden_3}.
\begin{align} 
    \hat{\mu}_{a}^{\tn{DR-ABC}} &= \frac{1}{n_0} \sum_{i=1}^n \ind{S_i = 0} f_a (X_i) \nonumber \\
    &\hspace{10pt}+ \frac{1}{n (1-\hat{p})} \sum_{i=1}^n \Big(  \ind{S_i = 0}\hat{b}_a (X_i) + \ind{S_i = 1, A=a} \frac{1 - \hat{p} (X_i)}{\hat{p} (X_i) \hat{\pi}^a (X_i)} \big(Z_i - \hat{b}_a (X_i)\big) \Big). \label{eq:drabc}
\end{align}
\begin{align} 
    \hat{\mu}^{a}_{\tn{DR-PA}} &= \frac{1}{n (1-\hat{p})} \sum_{i=1}^n \Big(  \ind{S_i = 0}\hat{h}_a (\tilde{X}_i) + \ind{S_i = 1, A=a} \frac{1 - \hat{p} (X_i)}{\hat{p} (\tilde{X}_i) \hat{\pi}^a (\tilde{X}_i)} \big(Y - \hat{h}_a (\tilde{X}_i)\big) \Big). \label{eq:drpa}
\end{align}
An essential property of DR estimators is that consistent estimation of {\em either} the regression or weighting functions guarantees consistent estimation of $\mu_a$, hence the name \enquote{doubly-robust}. Beyond this DR property, however, they have other desirable properties (under certain regularity conditions or cross-fitting techniques \cite{chernozhukov2018double}) such as asymptotical efficiency and normality, which enable one to construct confidence intervals beyond point prediction and allow for, {\em e.g.}, calculating p-values and testing hypotheses. We refer the interested reader to \cite{kennedy2023towards} for a unifying overview of the theory around the DR estimators, their properties, and how to construct them for different estimands of interest. We present empirical results for the DR estimators in \Cref{app:ipw_dr}.

\section{Additional Experimental Results} \label{app:aer}
\subsection{Bias-Variance Tradeoff} \label{app:bvt}
We do not plot the bias and variance terms for $\hat{\mu}^{\tn{OS-OM}}_1$. It has minimal ($\approx 0$) variance as one applies the observational predictor directly to the target sample, and nothing is fit from the small trial data. Since the target sample is taken to be large, the variance in $\hat{\mu}^{\tn{OS-OM}}_1$ is negligible. Almost all of its MSE (see \Cref{fig:synthetic-rmse}) consists of the bias, which results from hidden confounding introduced by concealing the $U$ variable (see \Cref{fig:dgp}).

In \Cref{fig:synthetic-biassq}, we see that the bias resulting from estimating the outcome function $g_1(X)$ from the trial sample is very large with a small model. Although it decreases with a larger model as expected, we see in \Cref{fig:synthetic-var} that the variance quickly explodes when the trial size is small ($n_1 = 200$) and $g_1 (X)$ is complex and has high intrinsic variation. When$n_1 = 1000$, the variance terms significantly decrease by a factor of $10$, and the RMSE of $\hat{\mu}_1^{\tn{OM}}$ becomes comparable to prediction-powered estimators when higher-degree polynomials are fit for $\hat{g}_1 (X)$.

Our approaches leveraging the additional predictor have smaller bias and variance terms. The difference is the most significant when the trial is small and the outcome function is complex.
\begin{figure*}[ht]
    \centering
    \includegraphics[width=\linewidth]{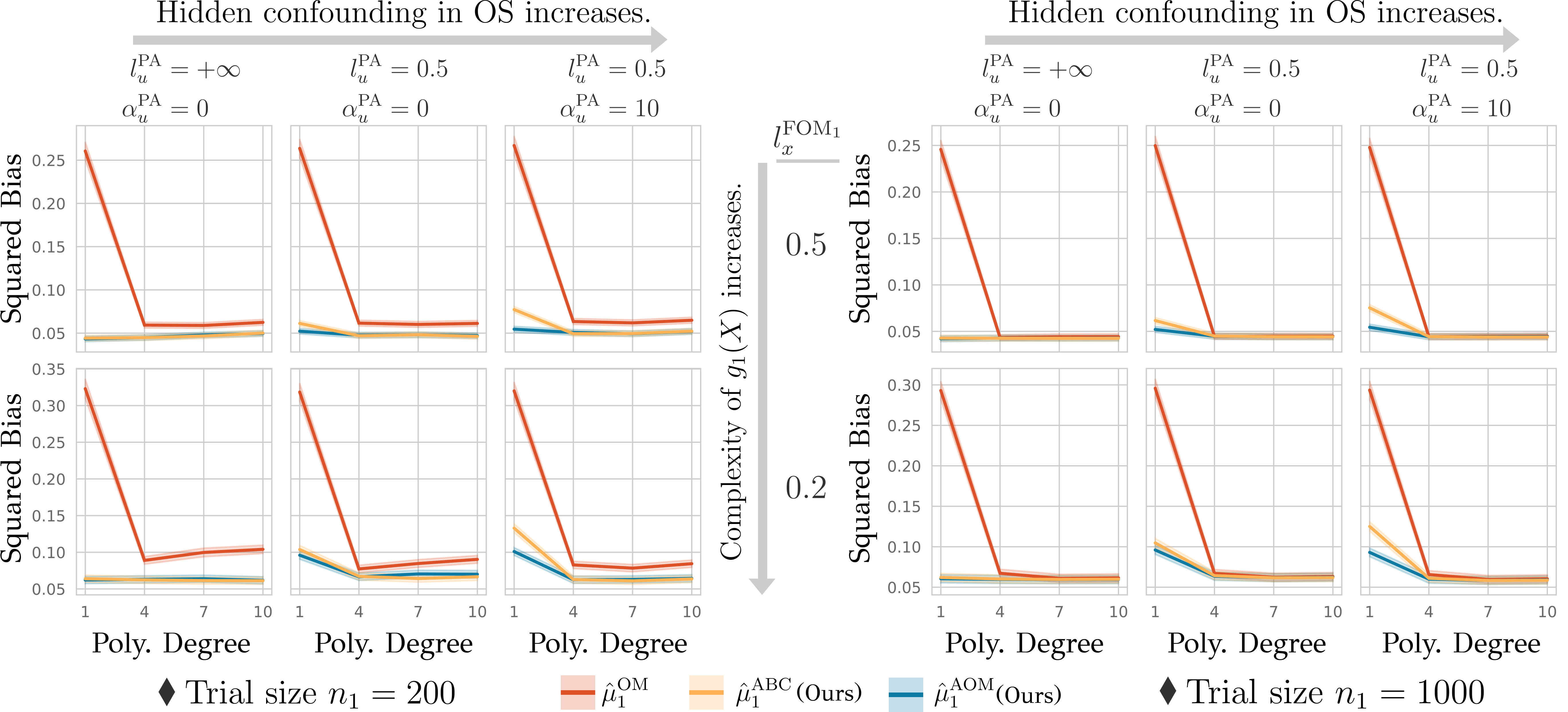}
    \caption{Convention same as \Cref{fig:synthetic-rmse}. Average squared bias of each estimator is plotted.}
    \label{fig:synthetic-biassq}
\end{figure*}

\begin{figure*}[ht]
    \centering
    \includegraphics[width=\linewidth]{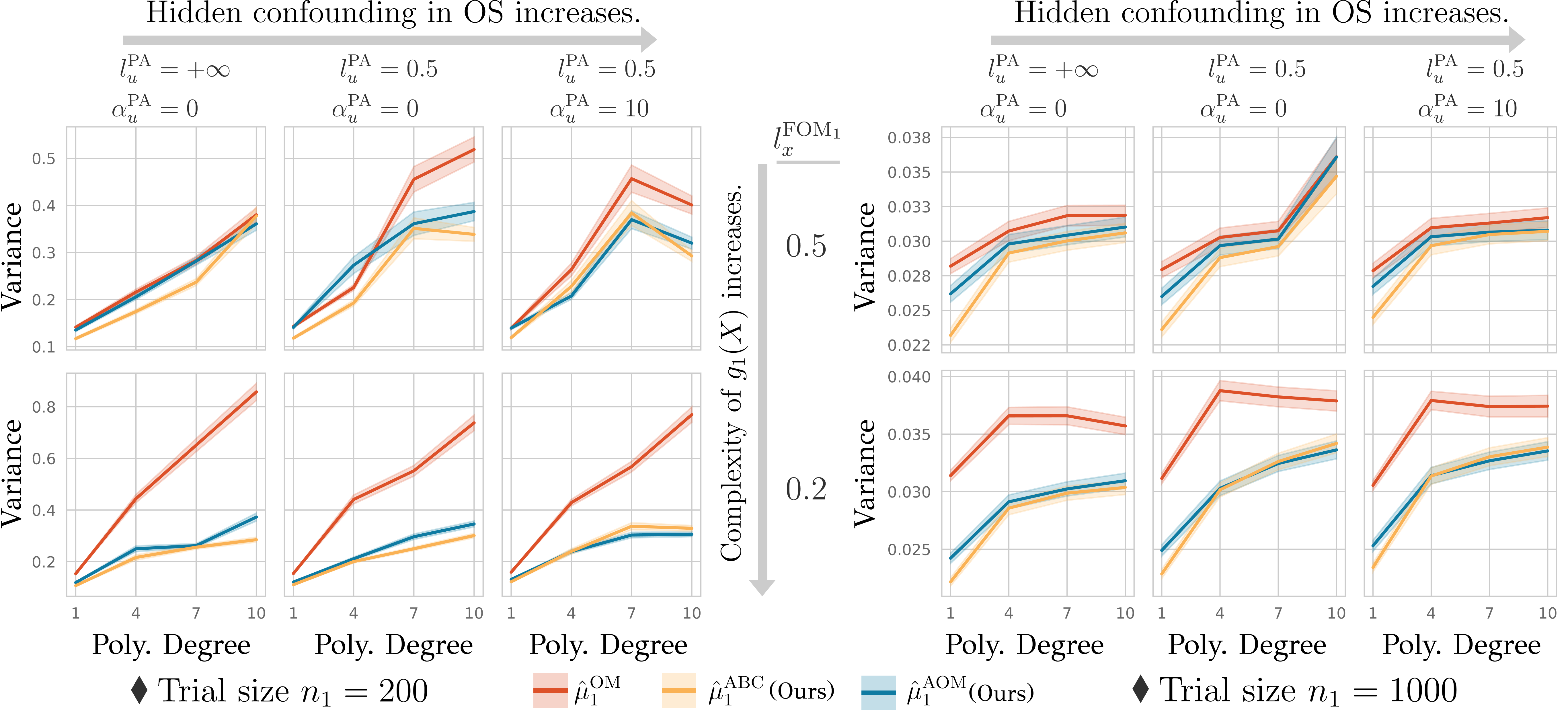}
    \vspace{-10pt}
    \caption{Convention same as \Cref{fig:synthetic-rmse}. Average variance of each estimator is plotted.}
    \label{fig:synthetic-var}
\end{figure*}
\subsection{IPW and DR-based Estimators} \label{app:ipw_dr}
In \Cref{fig:synthetic-rmse-ipw-dr}, we include the generalization RMSE for the doubly-robust (DR) and inverse propensity weighting (IPW) estimators. DR versions of our methods are given in \Cref{app:drest}. Baseline IPW and DR estimators are detailed in \cite{dahabreh2020extending}. The IPW estimator performs the worst due to the high variance in the propensity weight estimates, and the DR estimators perform similarly to the outcome-model (OM) estimators. \citet{dahabreh2020extending} report similar results. Finally, we note that as the sample size in the trial $n_{1}$ increases, the MSE of different estimators converge as before.
\begin{figure*}[b!]
    \centering
    \includegraphics[width=\linewidth]{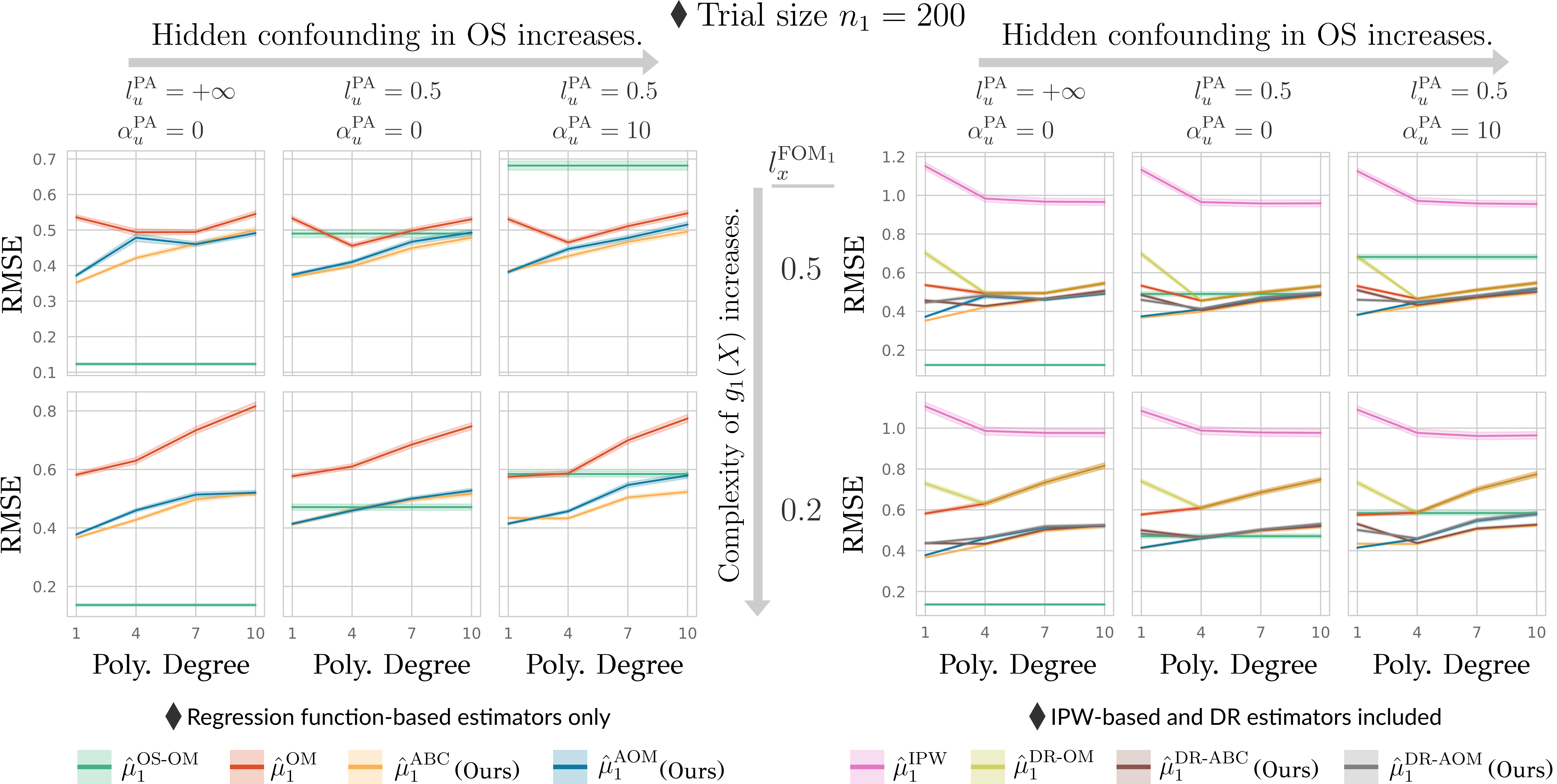}
    \vspace{-10pt}
    \caption{Generalization RMSE with DR and IPW estimators included.}
    \label{fig:synthetic-rmse-ipw-dr}
\end{figure*}

\subsection{Example Cases} \label{app:ec}
In Figures~\ref{fig:exc-4}-\ref{fig:exc-3}, we demonstrate several example cases where we plot the ground truth functions for the synthetic data-generating processes, an example trial sample that is used to fit $\hat{b}_1 (X)$, $\hat{g}_1 (X)$, $\hat{h}_1 (X)$ (plotted the linear fits only for simplicity, {\em i.e.}, 1st-degree polynomials), and the observational predictor $f_1(X)$. 

We aim to demonstrate how the outcome function $g_1(X)$ becomes \enquote{wiggly} as $l_x^{\tn{FOM}_1}$ decreases and has more rapid turns representing higher-order polynomials. Further, we see that as the hidden confounding increases, {\em i.e.}, as we move along the $x$-axis of plots, the bias of the observational predictor, $b_1(X) = g_1(X) - f_1(X)$ also increases and becomes a \enquote{higher-norm} function, decreasing the utility of leveraging observational data and increasing the RMSE. 

As we referred to earlier, one can see in Figures~\ref{fig:exc-2} and \ref{fig:exc-3}, for the cases with $l_x^{\tn{FOM}_1} = 0.2$ (complex outcome function $g_1(X)$) and $(l_u^{\tn{PA}} = 0.5, \alpha_u^{\tn{FOM}_1} = 10)$ (large hidden confounding), the bias function $b_1(X)$ is also a complex function with high norm, and the RMSE of the prediction-powered approaches are not significantly better than the baseline estimator $\hat{\mu}_{1}^{\tn{OM}}$.
\subsection{Using (Generalized) Linear Models in the Data-generating Process} \label{app:glmexp}
We sincerely thank Reviewer xwX2 for taking the time during the author-reviewer discussion phase to provide the initial codebase for the results presented here. Instead of generating the outcome and propensity score functions using GPs, we use polynomial models.

The full outcome model under treatment $A=1$ is specified as a 5-th order polynomial with parameter $\beta$. Precisely, we set
\begin{equation}
   \tn{FOM}_1 (X,U) = \beta_0 + \beta^X_1 X + \dots + \beta^X_5 X^5 + \gamma \left(\beta^U_1 U + \dots + \beta^U_5 U^5 + \beta^{XU}_1 XU + \dots + \beta^{XU}_5 (XU)^5 \right), \label{eq:linfom}
\end{equation}
and observe $Y^1 = \tn{FOM}_1 (X,U) + \epsilon$ where $\epsilon \sim {\cal N} (0, \sigma^2)$ for some $\sigma \in \mathbb{R}_+$ which we use to model the intrinsic variation in outcome observations. Larger values for $\sigma$ increase the risk of overfitting when the trial size $n_1$ is small. $\beta$ parameters characterize the complexity of the outcome function.

The probability of selection into the trial and the probability of treatment assignment in the OS are modeled as follows.
\begin{align}
    &P(S=1 \vert X, U) = \frac{1}{1 +  \exp \left(\lambda_0 + \lambda_1 X + \dots + \lambda_5 X^5\right)}.  \\
    &P(A=1 \vert S=2,X, U) \nonumber\\
    &\hspace{10pt}= \frac{1}{1 +  \exp \left(\alpha_0 + \alpha^X_1 X + \dots + \alpha^X_5 X^5 + \gamma \left(\alpha^U_1 U + \dots + \alpha^U_5 U^5 + \alpha^{XU}_1 XU + \dots + \alpha^{XU}_5 (XU)^5 \right) \right)}.
\end{align}
$\gamma \in \mathbb{R}_+$ determines the amount of hidden confounding in the OS, as $U$ is concealed. Further, $\lambda$ parameters characterize how weak the overlap is between the trial and target samples.

Briefly, larger values for $\beta$, $\gamma$, $\lambda$, and $\sigma$ parameters make the generalization task more challenging. In \Cref{tab:results}, we present the generalization MSEs under various settings. We always use $\alpha \sim {\cal N} (0, 1)$. We sample 100 ground-truth $\alpha, \beta$ parameters for each setting, make 100 independent runs for each ground-truth, and then present the average MSE values. We use both 1st and 5th order polynomials to fit the bias and outcome functions, $\hat{b}_1 (X)$ and $\hat{g}_1(X)$.
\begin{table}[h]
\centering
\begin{tabular}{>{\raggedright\arraybackslash}m{8cm}>{\centering\arraybackslash}m{1cm}>{\centering\arraybackslash}m{1cm}>{\centering\arraybackslash}m{1cm}>{\centering\arraybackslash}m{1cm}}
\toprule
\multirow{2}{*}{\makecell{Setting}} & \multicolumn{2}{c}{1st order poly. fit} & \multicolumn{2}{c}{5th order poly. fit} \\
\cmidrule(lr){2-3} \cmidrule(lr){4-5}
 & $\hat{\mu}_{1}^{\tn{ABC}}$ &$ \hat{\mu}_{1}^{\tn{OM}}$ & $\hat{\mu}_{1}^{\tn{ABC}}$ & $\hat{\mu}_{1}^{\tn{OM}}$ \\
\midrule
% Insert results here
 $\gamma = 0, \quad \epsilon \sim {\cal N} (0,0.1^2), \quad \beta \sim {\cal N} (0,1), \quad \hspace{3pt} \lambda=1$ & \textbf{.0001} &.0092  & .0002 & .0002  \\
 $\gamma = 1, \quad \epsilon \sim {\cal N} (0,0.1^2), \quad \beta \sim {\cal N} (0,1), \quad \hspace{3pt} \lambda=1$ & \textbf{.0087} &.0183  & .0148 & .0148  \\
 $\gamma = 0, \quad \epsilon \sim {\cal N} (0,2^2), \qquad \hspace{-1pt} \beta \sim {\cal N} (0,1), \quad \hspace{3pt} \lambda=1$  & \textbf{.0044} &.0054  & .0066 & .0066  \\
 $\gamma = 0, \quad \epsilon \sim {\cal N} (0,2^2), \qquad \hspace{-1pt} \beta \sim {\cal N} (0, 2^2), \quad \hspace{-1pt}\lambda=1$  & \textbf{.0044} &.0082  & .0066 & .0066  \\
 $\gamma = 0, \quad \epsilon \sim {\cal N} (0,2^2), \qquad \beta \sim {\cal N} (0, 2^2), \quad \hspace{-1pt} \lambda=2$ & \textbf{.0048} &.0127  & .0151 & .0151  \\
 $\gamma = 1, \quad \epsilon \sim {\cal N} (0,2^2), \qquad \beta \sim {\cal N} (0, 2^2), \quad \hspace{-1pt} \lambda=2$ & \textbf{.0060} &.0141  & .0139 & .0139  \\
\bottomrule
\end{tabular}
\caption{Generalization MSEs using (generalized) linear models in the data-generating process.}
\label{tab:results}
\end{table}

\begin{figure*}[ht]
    \centering
    \includegraphics[width=\linewidth]{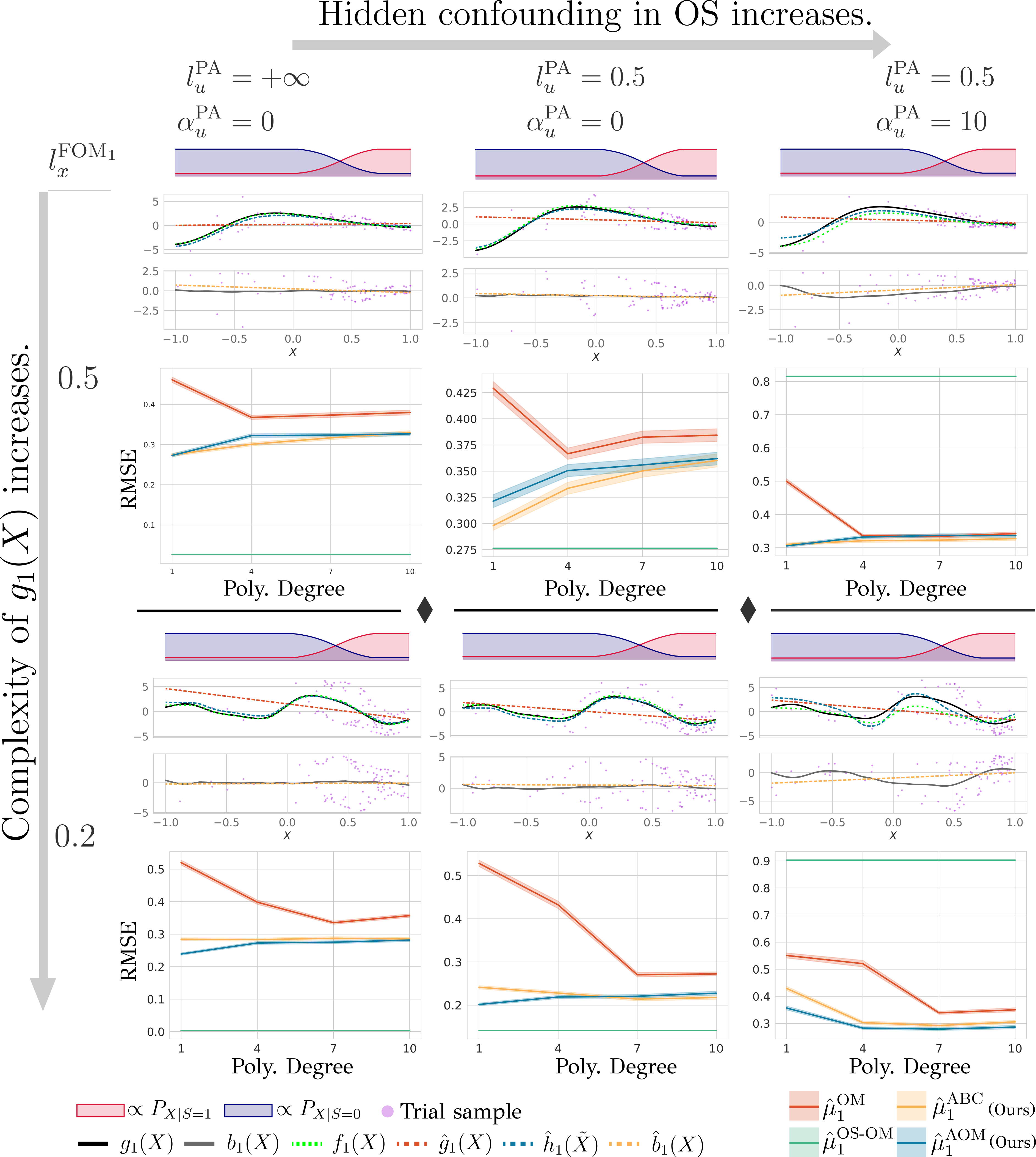}
    \caption{Example case 1.}
    \label{fig:exc-4}
\end{figure*}
\begin{figure*}[ht]
    \centering
    \includegraphics[width=\linewidth]{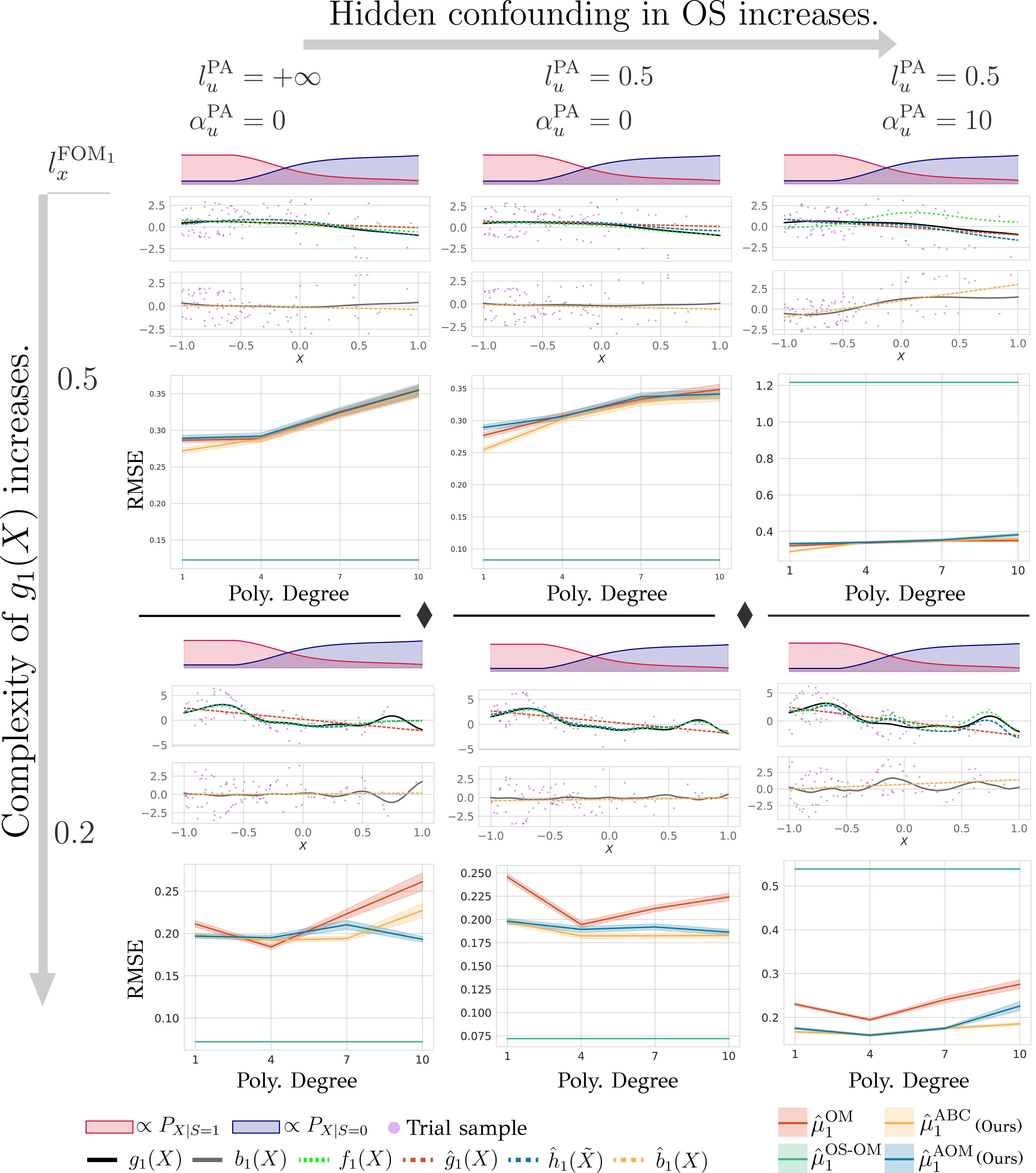}
    \caption{Example case 2.}
    \label{fig:exc-1}
\end{figure*}
\begin{figure*}[ht]
    \centering
    \includegraphics[width=\linewidth]{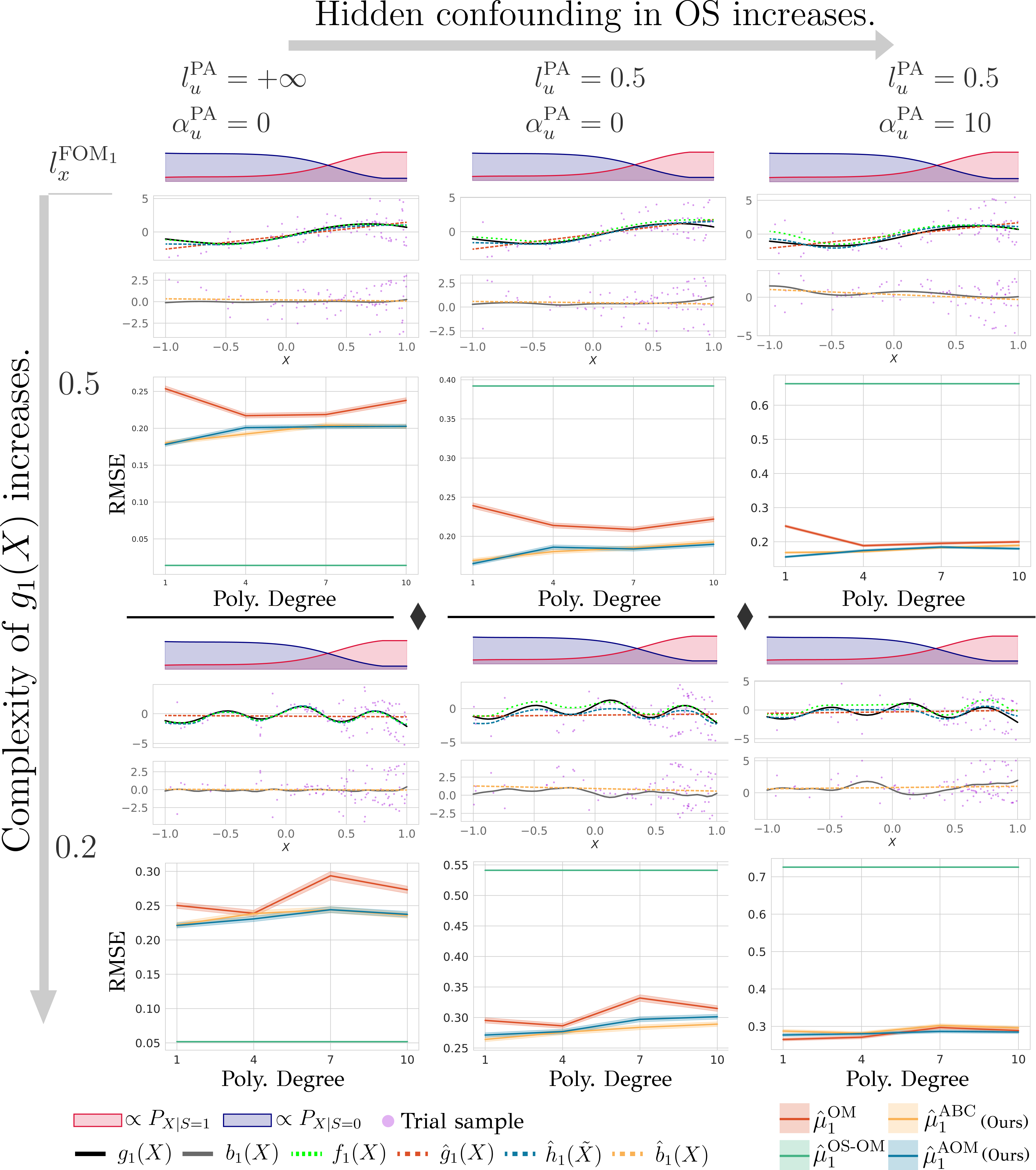}
    \caption{Example case 3.}
    \label{fig:exc-2}
\end{figure*}
\begin{figure*}[ht]
    \centering
    \includegraphics[width=\linewidth]{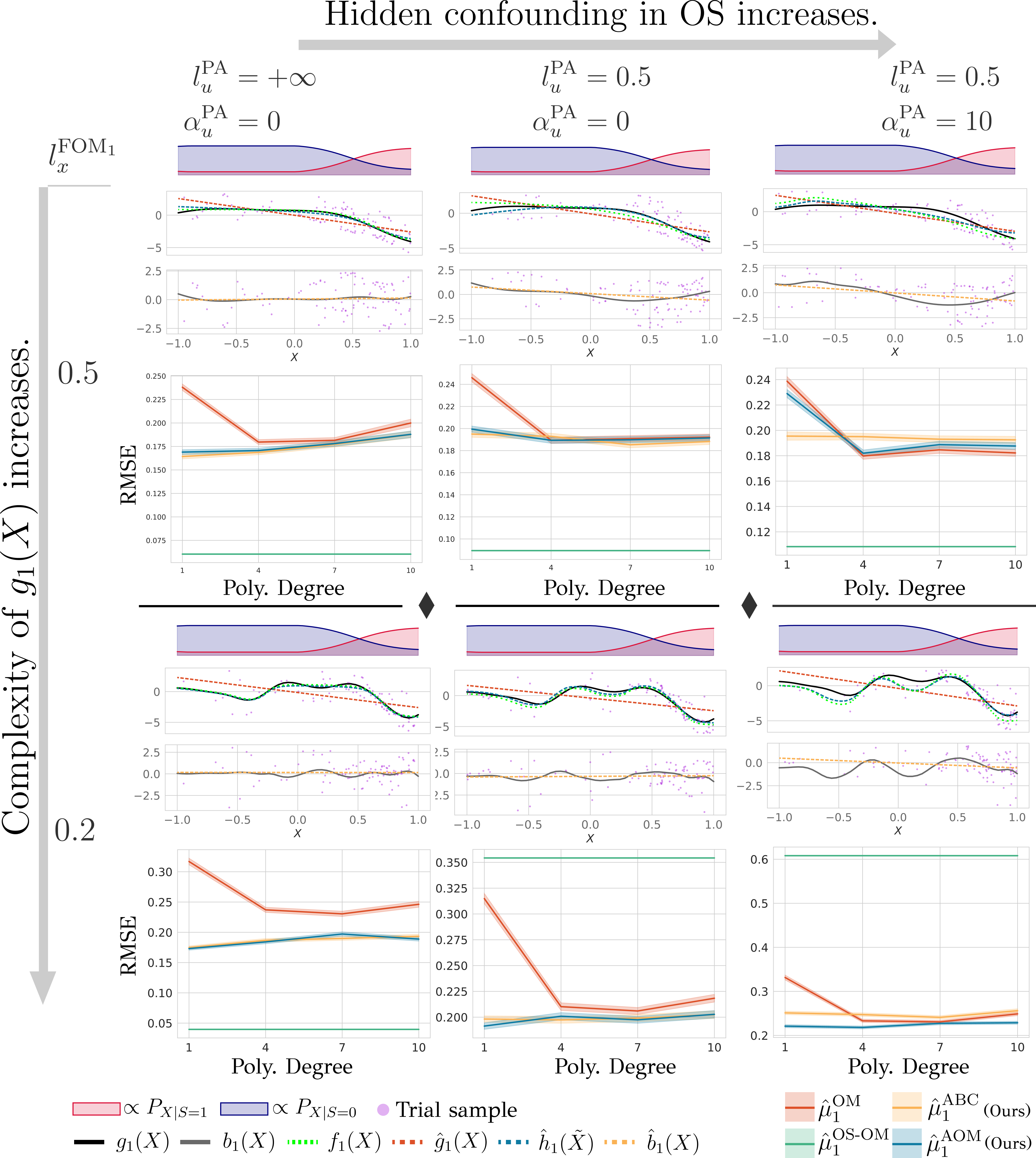}
    \caption{Example case 4.}
    \label{fig:exc-3}
\end{figure*}

\end{document}